\RequirePackage{fix-cm}
\documentclass[11pt]{article}

\usepackage[preprint]{acl}

\usepackage{times}
\usepackage{latexsym}
\usepackage[T1]{fontenc}
\usepackage[utf8]{inputenc}
\usepackage{microtype}
\usepackage{graphicx}
\usepackage{subcaption}
\usepackage{placeins}

\usepackage{amsmath,amssymb,amsthm}
\usepackage{booktabs,tabularx,multirow,xcolor,colortbl}
\usepackage{xspace}
\usepackage[ruled,vlined,linesnumbered]{algorithm2e}
\makeatletter
\let\ftype@algocf\ftype@figure
\makeatother

\newcommand{\method}{LoopAhead\xspace}

\newcommand{\spcell}[2]{%
  \makebox[2.2em][r]{#1}\,%
  \makebox[4.4em][l]{{\scriptsize #2}}%
}

\newcommand{\spgain}[2]{%
  \spcell{\textbf{#1}}{\textcolor{gaincolor}{$(+#2\%)$}}%
}
\newcommand{\sploss}[2]{%
  \spcell{#1}{\textcolor{losscolor}{$(-#2\%)$}}%
}
\newcommand{\spbase}[1]{%
  \spcell{#1}{(\textemdash)}%
}

\SetKwInput{KwInput}{Input}
\SetKwInput{KwOutput}{Output}
\SetKwProg{Fn}{Function}{:}{}
\SetKw{Break}{break}
\SetAlFnt{\small}
\SetAlgoNlRelativeSize{-1}
\SetInd{0.4em}{0.7em}
\SetCommentSty{textit}

\newcommand{\LAop}[1]{\operatorname{#1}}
\newcommand{\LAdraft}{\mathsf D}

\newcommand{\LAProb}{\Pi}
\newcommand{\LAres}{\mathbin{\ominus}}

\newtheorem{theorem}{Theorem}

\definecolor{gaincolor}{RGB}{0,153,0}
\definecolor{losscolor}{RGB}{180,0,0}

\newcommand{\maxgreedyar}{7.82}

\title{LoopAhead: Speculating Ahead for Faster Decoding in Looped Language Models}

\author{
    Zixuan Liu\textsuperscript{1},
    Nan Xue\textsuperscript{1},
    Rui Bao\textsuperscript{1},
    Yuhang Xu\textsuperscript{1},
    Xiaofeng Cao\textsuperscript{2},
    Jiangchao Yao\textsuperscript{1},
    Zhiyong Chen\textsuperscript{1}\thanks{Corresponding author.} \\
    \textsuperscript{1}Shanghai Jiao Tong University \\
    \textsuperscript{2}Tongji University \\
    \texttt{\{lzx20020405,nan.xue,851756936,xuyuhangtmx,Sunarker,zhiyongchen\}@sjtu.edu.cn} \\
    \texttt{xiaofengcao@tongji.edu.cn}
}

\hypersetup{
    pdftitle={LoopAhead: Speculating Ahead for Faster Decoding in Looped Language Models},
    pdfauthor={},
    pdfsubject={Research manuscript draft}
}

\begin{document}

\maketitle


\begin{abstract}
Looped Transformers improve parameter efficiency by repeatedly applying the same Transformer layers. However, generating each token still involves sequential computation and repeated access to model weights, leading to high decoding latency. Recent pipelined self-speculative decoding methods reduce this delay by drafting tokens from early-loop predictions and asynchronously advancing tokens at different recurrent depths in the same batch, but they do not fully exploit early-loop features for longer lookahead and still spend computation verifying incorrect speculative branches.
We introduce \method, a speculative decoding framework for looped transformers that enables earlier drafting of future tokens and assigns complementary roles to different recurrent depths. A lightweight lookahead drafter (LAD) predicts multiple future tokens with early-loop states, and deeper-loop feedback helps decide which draft branches to continue and which to pause, while the full-depth model makes the final token decisions. \method achieves up to \maxgreedyar{}$\times$ decoding speedup over autoregressive decoding while preserving the target model's output distribution. 
\end{abstract}

\begin{figure}[!t]
    \centering
    \includegraphics[width=\columnwidth]{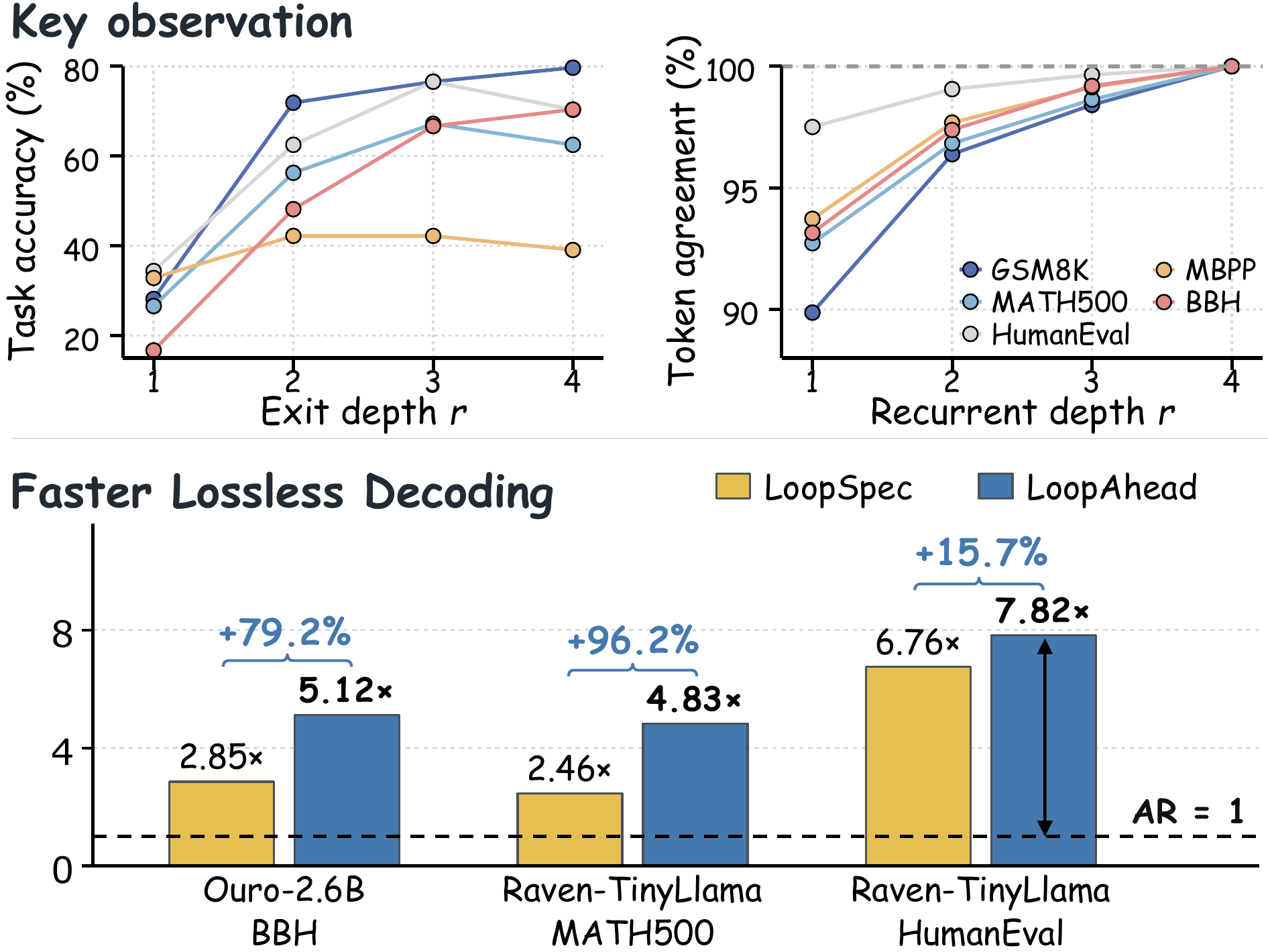}
\caption{Key observations and decoding speedups of LoopAhead. Top left shows Ouro-1.4B's task accuracy when generating at a fixed loop depth $r$. Top right shows token agreement between depth-$r$ and full-depth predictions under the same prefixes. The bottom panel shows representative decoding speedups over AR, with percentages indicating throughput gains over LoopSpec.}
    \label{fig:overview}
\end{figure}

\section{Introduction}
\label{sec:intro}

Looped Transformers have recently attracted increasing attention as a parameter-efficient architecture for large language models \cite{teaching,scaling}. By repeatedly applying a shared Transformer stack across recurrent depths, they achieve performance comparable to substantially deeper and larger models while using significantly fewer parameters \cite{reasoning,universal,scalinglatent}. However, under full-depth autoregressive decoding, a looped model completes all configured recurrent passes before producing each token \cite{loopspec}, and its parameters are repeatedly accessed across iterations for every generated token, leading to increased memory-bandwidth pressure \cite{speed}. The resulting latency makes inference acceleration particularly relevant to looped models \cite{depth,loopspec}.

Speculative decoding accelerates autoregressive inference by generating candidate tokens with a lightweight drafter and verifying them in parallel with the target model \citep{SD,SS,trispec}. Feature-based methods further improve drafting using target hidden representations \cite{eagle,eagle2,eagle3,learning}. However, these methods generally separate drafting from verification and do not fully exploit the shared recurrent structure of looped Transformers. Recent methods such as LoopSpec \cite{loopspec} and WaveFront \cite{wavefront} address this by pipelining early-loop proposals with full-depth verification, allowing tokens at different recurrent depths to be processed together. In particular, LoopSpec achieves substantial speedups without training an additional drafter. Nevertheless, each new proposal requires its predecessor to reach a specified recurrent depth, limiting how early future tokens can be generated. Moreover, incorrect branches may continue consuming computation until full-depth verification.

To go beyond the speedup achieved by pipelining alone, we explore how recurrent states can support both drafting and early candidate checking. Prior work has shown the value of target features and early-layer predictions for speculative drafting \cite{eagle,layerskip}. Our observation in Figure~\ref{fig:overview} shows that generating complete answers using only early loops can substantially reduce task accuracy. Yet early-loop predictions already show high token agreement with full-depth outputs under the same prefixes, and this agreement rises rapidly over subsequent loops. These observations motivate using early-loop states to draft multiple future tokens and using intermediate-loop feedback to guide execution of speculative branches before full-depth verification.

Motivated by these observations, we introduce LoopAhead,
a novel speculative decoding framework for looped language
models that combines early-loop multi-token drafting with
intermediate-loop feedback-guided scheduling. A separately trained lightweight lookahead drafter predicts multiple future candidates from selected early-loop states and a native proposal anchor. Feedback from intermediate loops guides the recurrent computation, suspension and resumption. Our experiments show that feedback-guided scheduling reduces computation on unpromising branches while improving decoding throughput, making early lookahead more efficient. LoopAhead remains lossless since the scheduler only controls speculative execution, while all final output decisions remain determined by the full-depth target verification rule. Our contributions are:
\begin{itemize}
\item We introduce LoopAhead, a lossless speculative decoding framework for looped language models. Its lookahead drafter uses early-loop states and native proposal anchors to predict multiple future positions, and uses a compact draft vocabulary to reduce drafting overhead. \item We design a feedback-guided scheduling mechanism that leverages intermediate recurrent representations to dynamically suspend and resume speculative branches under bounded execution budgets. Experimental results demonstrate that our scheduling strategy effectively limits wasted computation caused by unpromising speculative paths. \item We evaluate LoopAhead across diverse looped language models and benchmarks under both greedy and sampling decoding settings. LoopAhead achieves up to \maxgreedyar{}$\times$ decoding speedup over AR as shown in Figure~\ref{fig:overview}. Further analysis and ablation studies demonstrate the effectiveness of components and their complementary contributions to performance.
\end{itemize}

\begin{figure*}
    \centering
    \includegraphics[width=1\linewidth]{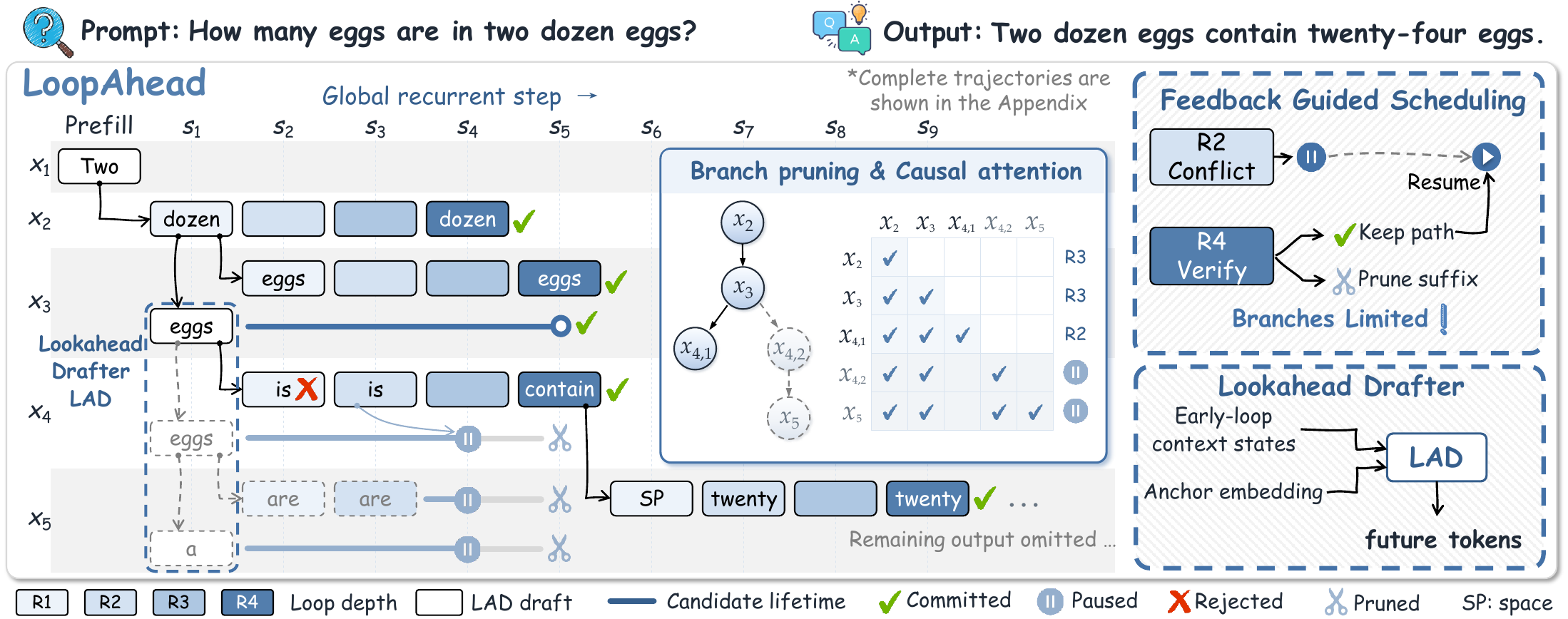}
\caption{Illustration of LoopAhead with an Ouro-1.4B decoding trajectory. LAD uses early-loop context and the native anchor to draft future tokens before verification. R2 feedback suspends conflicting branches while retaining their KV states for resumption. Full-depth R4 verification determines acceptance and pruning. The complete traces appear in Appendix~\ref{app:case} and Figure~\ref{fig:system-appendix}.}
    \label{fig:system}
\end{figure*}

\section{Preliminaries}
\label{sec:preliminaries}

\paragraph{Looped Transformer.}
Looped Transformers increase computational depth by repeatedly applying a shared stack $\mathcal F_\theta$~\cite{universal,scaling}. Suppressing input-conditioning and cache arguments, the recurrent core follows
\[
\mathbf Z^{(r)}=\mathcal F_\theta\bigl(\mathbf Z^{(r-1)}\bigr),
\qquad r=1,\ldots,R,
\]
where $\mathbf Z^{(0)}$ contains the initial hidden states. Repeating $n_{\mathrm{layer}}$ shared layers gives an effective core depth of $n_{\mathrm{layer}}R$ without additional parameters. 

\paragraph{Learned Drafting for Speculative Decoding.}
Speculative decoding verifies draft tokens in parallel using a target model~\cite{SD,SS}. Blockwise decoding, MTP, and Medusa predict multiple positions in parallel~\cite{blockwise,mtp,medusa}. Hydra and ReDrafter introduce dependencies between draft tokens~\cite{hydra,redrafter}. EAGLE drafts autoregressively from target features~\cite{eagle}. EAGLE-2 adapts draft trees~\cite{eagle2}, and EAGLE-3 fuses multi-layer features for direct token prediction~\cite{eagle3}. DistillSpec improves draft--target alignment through distillation~\cite{distillspec}, while HASS aligns training and decoding contexts and objectives~\cite{learning}. LayerSkip trains early exits for self-speculation~\cite{layerskip}. DFlash generates draft blocks in parallel with a diffusion model~\cite{dflash}.

\paragraph{Pipelined Speculative Decoding.}

SPEED pipelines decoding with cyclic parameter sharing, but invalidates dependent states and restarts computation when intermediate predictions change~\cite{speed}. LoopSpec~\cite{loopspec} provides training-free pipelining specially designed for looped models with gated deeper proposals and full-depth verification. However, each native readout predicts only the next token position and drafting further ahead requires additional recurrent computation.

\section{Method}
\label{sec:method}

\paragraph{Framework Overview}
Figure~\ref{fig:overview} shows that early-loop predictions often agree with full-depth outputs, motivating us to draft future tokens early and use deeper-loop feedback to guide their execution. Figure~\ref{fig:system} illustrates how LoopAhead combines these roles with full-depth verification.

\subsection{Early-Loop LookAhead Drafting}
\label{sec:LAD}

\paragraph{Anchor-Conditioned Drafting from Early-Loop States}
Let $h_t^{(r)}$ denote the target state immediately before the vocabulary projection after $r$ loops. It is computed at token $x_{t-1}$ and predicts position $t$. We collect $w$ cached depth-$r$ states along this branch.
\begin{equation}
    \mathbf H_t^{(r)}=[h_{t-w+1}^{(r)},\ldots,h_t^{(r)}].
\end{equation}
For greedy decoding, the native prediction supplies an anchor at position $t$.
\begin{equation}
    a_t=\arg\max_{v\in\mathcal V_{\mathrm{tgt}}}
    [W_{\mathrm{out}}h_t^{(r)}]_v,
\end{equation}
where $W_{\mathrm{out}}$ is the frozen target vocabulary projection and $\mathcal V_{\mathrm{tgt}}$ is the target vocabulary. We call this direct target-model prediction a native proposal. 
Under sampling, the anchor is drawn after temperature scaling and nucleus truncation. The anchor need not be verified before drafting. Its embedding supplies the token at position $t$, while the cached states describe the preceding context.

The LAD drafter $\mathcal D_\phi$ uses $\mathbf H_t^{(r)}$ and $E(a_t)$ to predict $K$ additional tokens at positions $t+1,\ldots,t+K$, where $E$ denotes the frozen target token embedding. It projects the context into a smaller hidden space and appends one query for each future position. Each query combines the anchor embedding, the latest projected context state, and a position embedding. A lightweight causal Transformer processes the context and queries together. Its output is projected back to the target hidden size and added to the latest early-loop state.
\begin{equation}
\begin{aligned}
    \widetilde h_{t,j}
    &=h_t^{(r)}+\mathcal D_{\phi,j}
      \bigl(\mathbf H_t^{(r)},E(a_t)\bigr),\\
    q_{\phi,j}
    &=\operatorname{softmax}
      \bigl(W_{\mathrm{out}}\widetilde h_{t,j}\bigr).
\end{aligned}
\end{equation}
Here $q_{\phi,j}$ is the base drafter's token distribution at position $t+j$. The base drafter computes all future hidden states in one forward pass. The causal mask allows each query to attend to the context and earlier query representations. All queries are constructed before token selection, so the base drafter predicts future positions without feeding one selected draft token into the next prediction. Ouro and Raven-TinyLlama select draft tokens directly from these predictions. Raven-Llama and Raven-OLMo add a small logit correction conditioned on the preceding draft token. Appendix~\ref{app:setting:implementation} gives the model-specific adaptations. Draft tokens are selected by argmax in both decoding regimes. Sampling randomness is handled by native proposals and full-depth verification.

\paragraph{Teacher Trajectories and Training Objectives}
The frozen target generates full-depth greedy continuations of mixed-domain prompts. We record its early-loop states and use its generated tokens as training labels. For a teacher trajectory $\mathbf y$, a training example at prediction position $t$ is
\begin{equation}
    \bigl(\mathbf H_t^{(r)},\;a_t=y_t,\;
    y_{t+1:t+K_{\mathrm{train}}}\bigr).
\end{equation}
Training supervises up to $K_{\mathrm{train}}$ future positions, and inference uses $K\le K_{\mathrm{train}}$. Base training uses teacher tokens as anchors. At inference, the anchor comes from the native early-loop proposal. Labels beyond the recorded continuation are masked. We train the base drafter with cross-entropy over valid teacher-label indices $\mathcal I$.
\begin{equation}
    \mathcal L_{\mathrm{LAD}}
    =-\frac{1}{|\mathcal I|}
      \sum_{(t,j)\in\mathcal I}
      \log q_{\phi,j}
      \bigl(y_{t+j}\mid\mathbf H_t^{(r)},y_t\bigr).
\end{equation}
The target model, token embeddings, and output projection remain frozen. The base drafter learns to turn early context into hidden states that predict the target's later tokens. Training objectives for the residual adaptations are given in Appendix~\ref{app:setting:implementation}.

\paragraph{Low-Overhead Drafter Execution}
At inference, LAD projects to a request-specific vocabulary $\mathcal V_{\mathrm{draft}}$ containing frequent training-output tokens, stop tokens, prompt tokens, and already committed tokens. We gather the corresponding rows of $W_{\mathrm{out}}$ and restrict draft argmax and score normalization to this set. This reduces projection cost for each future position. Native proposals and final verification use the full target vocabulary $\mathcal V_{\mathrm{tgt}}$.

\subsection{Feedback-Guided Scheduling}
\label{sec:scheduling}

\paragraph{Feedback from Intermediate Loops}
Intermediate-loop readouts provide early feedback for candidates awaiting full-depth verification. A parent's readout predicts the token represented by its child. Under greedy decoding, a candidate conflicts if its token differs from the parent's prediction at the deepest available feedback depth. Under sampling, it conflicts if its token differs from all native proposal tokens recorded by the parent. Ouro uses one feedback depth, while Raven updates feedback across multiple depths. Appendix~\ref{app:setting:implementation} gives the selected depths and sampling rules.

\paragraph{Candidate Tree and Recurrent Execution}
The root represents the last committed token, and its descendants represent speculative continuations. Each retained node represents a candidate token and stores its parent, hidden state, KV states, and completed loop depth. Native proposals and LAD drafts enter the same tree. Proposals with the same token and parent share one node. The tree therefore reuses computations when different drafting paths propose the same continuation. Within each LAD extension, the candidate at position $t+1$ is attached to the native anchor $a_t$, and each later candidate is attached to the preceding draft token. Different candidate tokens extending the same prefix form separate branches.

Nodes advance from their own loop depths, allowing candidates at different stages to run in the same batch. At each depth, a node attends to itself, the committed prefix, and its ancestors on the same branch. Figure~\ref{fig:system} illustrates this attention mask. Required ancestor KV states must be available at the corresponding depth, either from the cache or from ancestors advanced in the same batch.

\paragraph{Execution Policy}
$B_{\mathrm{ret}}$ limits the nodes kept in the tree, while $B_{\mathrm{exe}}$ limits those advanced in each batch. A path conflicts if any token on it conflicts. Nodes within $d_{\mathrm{guard}}$ token positions of the committed prefix are protected from conflict-based suspension. The scheduler applies three operations.
\par\noindent
\textit{(1) Execution.} Select at most $B_{\mathrm{exe}}$ unfinished nodes with available predecessor states. Eligible nodes have conflict-free paths or lie within the protected region. The scheduling priority chooses among eligible nodes when the batch is full.
\par\noindent
\textit{(2) Suspension and resumption.} Suspend conflicting paths outside the protected region and retain their hidden states, KV states, and completed loop depths. A node becomes eligible again when feedback resolves the conflict or root advancement brings it into the protected region. Nodes left out of a batch remain available for later execution.
\par\noindent
\textit{(3) Pruning.} Full-depth verification selects the next output token. Keep its continuation and cached states, and discard competing subtrees. If the selected token has no retained node, create one and compute its states from the committed prefix.

\paragraph{Recursive Draft Expansion}
After each recurrent step, our scheduling policy considers nodes newly reaching the drafting depth and selects at most one of their native proposal children as an anchor, prioritizing those nearest the root. Drafting requires a cached context window and an anchor with no existing children. Each LAD call adds a chain of at most $K$ tokens, subject to retention capacity and model-specific admission rules. A drafted node can later reach the same early depth and support another call, extending lookahead beyond one draft block. Appendix~\ref{app:setting:implementation} shows the details of admission and execution priorities.

\paragraph{Full-Depth Verification and Commitment}
The full-depth readout of a node determines the token that follows it. Greedy decoding selects the argmax. Sampling applies native proposal acceptance and residual correction against the full-depth distribution, as described in Appendix~\ref{app:algorithm}. The selected token is then matched to a retained child, whether proposed natively or by LAD. This child becomes the root and preserves its computed states. If its full-depth readout is ready, commitment continues within the same step.

\subsection{Theoretical Analysis}
\label{sec:theorem}

For greedy decoding, consider a length-$L$ chain $x_{1:L}$ formed by recursive LAD calls after a correct prefix, and let $y_{1:L}^{\star}$ be the full-depth continuation. Let $N$ be their matching-prefix length and $J=N+1$ the first mismatch, with $J=L+1$ if all tokens match. Define $\alpha_i=\Pr(N\ge i\mid N\ge i-1)$, $S_i=\Pr(N\ge i)=\prod_{j=1}^{i}\alpha_j$ with $S_0=1$, and $A_L=\mathbb{E}[N]=\sum_{i=1}^{L}S_i$.

Fix a feedback depth $r<R$. Let $\hat y_i^{(r)}$ be its native prediction under the same preceding tokens, and $\gamma_i=\Pr(\hat y_i^{(r)}=y_i^\star\mid J=i)$ denote the probability that feedback predicts the target token at the first mismatch.
Compare the same candidate workload with and without suspension, keeping other policies fixed. Let $W_{\mathrm{avoid}}$ count advances on the rejected suffix performed without suspension and skipped with it, counting through final pruning. One advance runs one recurrent loop for one candidate. Let $\bar G_i\ge0$ be a lower bound on $\mathbb{E}[W_{\mathrm{avoid}}\mid J=i,\hat y_i^{(r)}=y_i^\star]$.

\begin{theorem}[Feedback-Guided Recursive Draft Filtering]
\label{thm:recursive-filter}
Under this matched-workload comparison,
\begin{equation}
    \mathbb{E}[W_{\mathrm{avoid}}]
    \ge\sum_{i=1}^{L}S_{i-1}(1-\alpha_i)\gamma_i\bar G_i.
    \label{eq:recursive-filter-bound}
\end{equation}
If $\gamma_i\ge\gamma$ and $\bar G_i\ge g(L-i+1)$ for all $i$, where $\gamma,g\ge0$, then
\begin{equation}
    \mathbb{E}[W_{\mathrm{avoid}}]
    \ge g\gamma(L-A_L).
    \label{eq:recursive-work-bound}
\end{equation}
\end{theorem}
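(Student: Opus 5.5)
The plan is to condition on the first-mismatch index $J$, lower-bound the avoided work on each event $\{J=i\}$ with the feedback event, and then sum. The events $\{J=i\}$ for $i=1,\ldots,L+1$ partition the sample space. Since $\{J=i\}=\{N\ge i-1\}\setminus\{N\ge i\}$, we have
\[
\Pr(J=i)=S_{i-1}-S_i=S_{i-1}(1-\alpha_i)\quad(i\le L).
\]
Because $W_{\mathrm{avoid}}\ge0$ (it counts advances), we can drop the event $J=L+1$ and write
\[
\mathbb{E}[W_{\mathrm{avoid}}]\ge\sum_{i=1}^{L}\Pr(J=i)\,\mathbb{E}[W_{\mathrm{avoid}}\mid J=i].
\]

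Within each $\{J=i\}$, split further on $F_i=\{\hat y_i^{(r)}=y_i^\star\}$, which has conditional probability $\gamma_i$. Nonnegativity again lets us discard the complement of $F_i$, giving
\[
\mathbb{E}[W_{\mathrm{avoid}}\mid J=i]\ge\gamma_i\,\mathbb{E}[W_{\mathrm{avoid}}\mid J=i,F_i]\ge\gamma_i\bar G_i.
\]
The second inequality is the definition of $\bar G_i$. Substituting into the sum gives \eqref{eq:recursive-filter-bound}.

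The one modeling point to check is that the statement is well-posed: on $F_i$, the drafted token $x_i\ne y_i^\star=\hat y_i^{(r)}$ conflicts with the parent's feedback. The node lies on the rejected suffix and is outside the protected region, so the suspension policy actually skips it. This is exactly what the hypothesis on $\bar G_i$ encodes. The matched-workload comparison ensures that the advances counted are identical apart from this skipping, so no extra case analysis is needed.

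For \eqref{eq:recursive-work-bound}, substitute $\gamma_i\ge\gamma$ and $\bar G_i\ge g(L-i+1)$, all terms being nonnegative. It remains to show
\[
\sum_{i=1}^{L}(S_{i-1}-S_i)(L-i+1)=L-A_L.
\]
I would verify this by summation by parts. Writing $L-i+1=\sum_{k=i}^{L}1$ and swapping the order of summation gives
\[
\sum_{k=1}^{L}\sum_{i=1}^{k}(S_{i-1}-S_i)=\sum_{k=1}^{L}(S_0-S_k)=L-\sum_{k=1}^{L}S_k=L-A_L,
\]
using $S_0=1$. Equivalently, $L-A_L=\mathbb{E}[L-N]$, and $L-N=L-J+1$ on $\{J\le L\}$ while it is $0$ when $J=L+1$.

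I expect the main obstacle to be not algebraic but the justification that conditioning on $J=i$ with correct feedback at position $i$ yields suspension of the whole rejected suffix. This is why the theorem packages it as the assumed lower bound $\bar G_i$, and the proof only needs to invoke it.
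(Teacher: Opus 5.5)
Your proposal is correct and matches the paper's proof step for step: the first-mismatch probabilities $\Pr(J=i)=S_{i-1}(1-\alpha_i)$, restriction to the correct-feedback event using $W_{\mathrm{avoid}}\ge 0$ together with the assumed bound $\bar G_i$, and the identity $\sum_{i}(S_{i-1}-S_i)(L-i+1)=L-A_L$. One small caveat concerns your side remark: the conflicting node need not lie outside the protected region, but the paper folds near-root protection into $\bar G_i$, so your argument does not rely on that claim.
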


The first bound sums the contributions from correct feedback at each possible first mismatch. In the second bound, $A_L$ is the expected matching-prefix length and $L-A_L$ is the expected rejected-suffix length. An early mismatch can leave many recursive successors on which feedback can save computation. The proof and extensions to multiple feedback depths and variable chain lengths are provided in Appendix~\ref{app:recursive-proof} in detail.

\begingroup
\renewcommand{\topfraction}{0.95}
\renewcommand{\dbltopfraction}{0.95}
\renewcommand{\textfraction}{0.05}
\setcounter{topnumber}{5} \setcounter{dbltopnumber}{5}
\begin{table*}[t]
\centering
\footnotesize
\setlength{\tabcolsep}{2.5pt}
\renewcommand{\arraystretch}{1.10}
\begin{tabularx}{\textwidth}{ll*{5}{>{\centering\arraybackslash}X}}
\toprule
Model & Method & GSM8K & MATH500 & HumanEval & MBPP & BBH \\
\midrule

\multirow{3}{*}{Ouro-1.4B}
 & DFlash
 & \sploss{2.19}{24.8} & \sploss{2.35}{11.9}
 & \sploss{1.29}{60.8} & \sploss{1.64}{45.4}
 & \spcell{3.21}{\textcolor{gaincolor}{$(+20.6\%)$}} \\
 & LoopSpec
 & \spbase{2.92} & \spbase{2.66}
 & \spbase{3.29} & \spbase{3.00}
 & \spbase{2.66} \\
\rowcolor[HTML]{E6E6FF}
 & Ours
 & \spgain{4.27}{46.4} & \spgain{3.73}{40.0}
 & \spgain{4.07}{23.8} & \spgain{4.01}{33.6}
 & \spgain{4.30}{61.6} \\
\midrule

\multirow{3}{*}{Ouro-2.6B}
 & DFlash
 & \sploss{2.24}{30.0} & \sploss{2.30}{18.0}
 & \sploss{1.35}{60.9} & \sploss{1.72}{46.6}
 & \spcell{3.42}{\textcolor{gaincolor}{$(+19.8\%)$}} \\
 & LoopSpec
 & \spbase{3.19} & \spbase{2.80}
 & \spbase{3.45} & \spbase{3.22}
 & \spbase{2.85} \\
\rowcolor[HTML]{E6E6FF}
 & Ours
 & \spgain{5.21}{63.1} & \spgain{4.37}{55.9}
 & \spgain{4.72}{36.9} & \spgain{4.76}{47.8}
 & \spgain{5.12}{79.2} \\
\midrule

\multirow{3}{*}{Ouro-1.4B-Thinking}
 & DFlash
 & \sploss{2.52}{6.1} & \sploss{2.40}{1.6}
 & \sploss{1.88}{28.9} & \sploss{1.92}{27.9}
 & \spcell{2.62}{\textcolor{gaincolor}{$(+18.8\%)$}} \\
 & LoopSpec
 & \spbase{2.68} & \spbase{2.44}
 & \spbase{2.64} & \spbase{2.66}
 & \spbase{2.21} \\
\rowcolor[HTML]{E6E6FF}
 & Ours
 & \spgain{3.49}{30.2} & \spgain{3.10}{27.0}
 & \spgain{3.18}{20.6} & \spgain{3.17}{19.0}
 & \spgain{2.82}{27.6} \\
\midrule

\multirow{3}{*}{Raven-Llama-3.2}
 & DFlash
 & \sploss{1.69}{62.6} & \sploss{1.60}{71.1}
 & \sploss{1.29}{78.1} & \sploss{1.47}{72.1}
 & \sploss{3.15}{38.0} \\
 & LoopSpec
 & \spbase{4.51} & \spbase{5.51}
 & \spbase{5.90} & \spbase{5.28}
 & \spbase{5.09} \\
\rowcolor[HTML]{E6E6FF}
 & Ours
 & \spgain{5.67}{25.9} & \spgain{6.05}{9.7}
 & \spgain{6.06}{2.7} & \spgain{5.86}{11.0}
 & \spgain{6.19}{21.6} \\
\midrule

\multirow{3}{*}{Raven-OLMo-2}
 & DFlash
 & \sploss{1.93}{58.5} & \sploss{2.65}{30.2}
 & \sploss{1.51}{73.7} & \sploss{1.94}{65.2}
 & \sploss{3.70}{7.3} \\
 & LoopSpec
 & \spbase{4.65} & \spbase{3.80}
 & \spbase{5.73} & \spbase{5.59}
 & \spbase{3.99} \\
\rowcolor[HTML]{E6E6FF}
 & Ours
 & \spgain{6.01}{29.4} & \spgain{5.74}{51.0}
 & \spgain{6.13}{6.8} & \spgain{6.22}{11.4}
 & \spgain{6.14}{54.0} \\
\midrule

\multirow{3}{*}{Raven-TinyLlama}
 & DFlash
 & \sploss{1.85}{65.8} & \sploss{2.13}{13.6}
 & \sploss{1.37}{79.7} & \sploss{1.70}{72.7}
 & \sploss{3.18}{45.9} \\
 & LoopSpec
 & \spbase{5.41} & \spbase{2.46}
 & \spbase{6.76} & \spbase{6.21}
 & \spbase{5.88} \\
\rowcolor[HTML]{E6E6FF}
 & Ours
 & \spgain{6.69}{23.5} & \spgain{4.83}{96.2}
 & \spgain{7.82}{15.7} & \spgain{7.47}{20.3}
 & \spgain{7.51}{27.8} \\

\bottomrule
\end{tabularx}
\caption{ Greedy decoding speedups ($\times$) over AR. Results for LoopAhead are highlighted with shading. Parentheses show throughput changes relative to LoopSpec, with increases in green and decreases in red. Bold marks the best result. Timing and aggregation are detailed in Appendix~\ref{app:setting:implementation}. }
\label{tab:greedy_speedup}
\end{table*}

\begin{figure*}[t]
    \centering
    \captionsetup[subfigure]{skip=2pt}
    \begin{subfigure}[t]{0.322\textwidth}
        \centering
        \includegraphics[width=\linewidth]{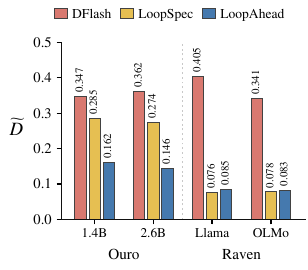}
        \caption{Loops / output token}
        \label{fig:mechanism-D}
    \end{subfigure}
    \hfill
    \begin{subfigure}[t]{0.322\textwidth}
        \centering
        \includegraphics[width=\linewidth]{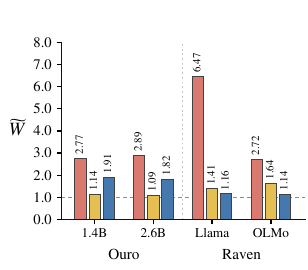}
        \caption{Candidate advances / output token}
        \label{fig:mechanism-W}
    \end{subfigure}
    \hfill
    \begin{subfigure}[t]{0.322\textwidth}
        \centering
        \includegraphics[width=\linewidth]{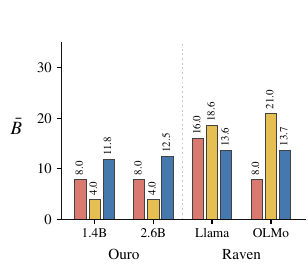}
        \caption{Candidates advanced / loop}
        \label{fig:mechanism-B}
    \end{subfigure}
    \caption{Recurrent computation profiles on Ouro-1.4B, Ouro-2.6B, Raven-Llama-3.2, and Raven-OLMo-2, pooled over five tasks with 3,023 requests per model. (a) Average loops per output token (relative to AR), $\widetilde D=S_{\mathrm{rec}}/(R N_{\mathrm{out}})$. (b) Average candidate advances per output token (relative to AR), $\widetilde W=W/(R N_{\mathrm{out}})$. (c) Average candidates advanced per loop, $\bar B=W/S_{\mathrm{rec}}$.}
    \label{fig:mechanism-stats}
\end{figure*}

\section{Experiments}
\label{sec:experiments}

\subsection{Settings}

\paragraph{Models.}
We evaluate Ouro-1.4B, Ouro-2.6B, Ouro-1.4B-Thinking \cite{scalinglatent}, Raven-Llama-3.2, Raven-OLMo-2, and Raven-TinyLlama \cite{teaching} from two model families, using 4 recurrences for Ouro and 32 for Raven. Inference runs on NVIDIA A40 GPUs (48\,GB), with one request per GPU and BF16 target computation in a local PyTorch-based backend.

\paragraph{Draft Training and Inference Details.}
Each target uses a separately trained LAD drafter shared across all benchmarks. Teacher trajectories are collected from a pool of 9,025 training prompts spanning GSM8K \cite{gsm8k}, CodeAlpaca \cite{CodeAlpaca}, MATH \cite{MATH}, BBH \cite{BBH}, and UltraChat \cite{ultrachat}, with 782 development and 782 confirmation prompts. 
LAD base drafters are two-layer Transformers with hidden size 512, trained for three epochs using AdamW with a peak learning rate of $2\times10^{-4}$. Raven-Llama/OLMo additionally use R4 residual adapters, as detailed in Appendix~\ref{app:setting:implementation}. The base drafter is trained to predict up to $K_{\mathrm{train}}=7$ future tokens beyond the native anchor. 
At inference time, each LAD invocation generates at most three additional tokens for Ouro and Raven-TinyLlama, and two for Raven-Llama/OLMo. All main LoopAhead results use an 8,192-token base draft vocabulary without changing the target vocabulary.

\paragraph{Baselines.}
We compare LoopAhead with three baselines. (i) \textbf{AR} uses standard autoregressive decoding at full recurrent depth. (ii) \textbf{LoopSpec} is our reproduction of pipelined self-speculative decoding~\cite{loopspec}, with proposal depths $(1,2)$ for Ouro and $(2,8)$ for Raven, following the dual-proposal configuration provided in the official LoopSpec implementation~\citep{loopspec}.  (iii) \textbf{DFlash} is a locally trained adaptation of DFlash~\cite{dflash}. Its training and development-based block-size selection are detailed in Appendix~\ref{app:setting:baselines}. 

\paragraph{Benchmarks.}
We evaluate GSM8K (1,319 questions) \cite{gsm8k}, MATH500 (500) \cite{MATH500}, HumanEval (164) \cite{humaneval}, MBPP test (500) \cite{MBPP}, and a fixed BBH held-out split (540 with 20 per subtask) \cite{BBH}. The BBH evaluation split is disjoint from our training split. These benchmarks cover mathematical reasoning, code generation, and general reasoning, while additional evaluations on summarization and broader instruction-following tasks are reported in Appendix~\ref{app:experiment:general-generation}.

\begin{table*}[t]
\centering
\scriptsize
\setlength{\tabcolsep}{1.7pt}
\renewcommand{\arraystretch}{1.10}
\newcommand{\sdev}[1]{{\normalfont\fontsize{7}{8}\selectfont$\pm$#1}}
\renewcommand{\spcell}[2]{\mbox{#1\,{\fontsize{7}{8}\selectfont #2}}}
\begin{tabularx}{\textwidth}{@{}ll*{5}{>{\centering\arraybackslash}X}@{}}
\toprule
Model & Method & GSM8K & MATH500 & HumanEval & MBPP & BBH \\
\midrule

\multirow{3}{*}{Ouro-1.4B}
 & DFlash
 & \sploss{1.72\sdev{0.03}}{30.4} & \sploss{1.94\sdev{0.001}}{18.2}
 & \sploss{1.24\sdev{0.02}}{56.8} & \sploss{1.40\sdev{0.03}}{45.8}
 & \sploss{2.15\sdev{0.03}}{5.1} \\
 & LoopSpec
 & \spbase{2.47\sdev{0.02}} & \spbase{2.37\sdev{0.001}}
 & \spbase{2.87\sdev{0.01}} & \spbase{2.58\sdev{0.02}}
 & \spbase{2.26\sdev{0.02}} \\
\rowcolor[HTML]{E6E6FF}
 & Ours
 & \spgain{3.03\sdev{0.04}}{22.8} & \spgain{3.06\sdev{0.01}}{28.9}
 & \spgain{3.42\sdev{0.02}}{18.9} & \spgain{3.00\sdev{0.04}}{16.2}
 & \spgain{3.09\sdev{0.03}}{36.4} \\
\midrule

\multirow{3}{*}{Raven-Llama-3.2}
 & DFlash
 & \sploss{1.12\sdev{0.03}}{52.6} & \sploss{1.16\sdev{0.01}}{58.2}
 & \sploss{1.03\sdev{0.03}}{66.6} & \sploss{1.07\sdev{0.01}}{61.2}
 & \sploss{2.00\sdev{0.06}}{38.5} \\
 & LoopSpec
 & \spbase{2.37\sdev{0.05}} & \spbase{2.78\sdev{0.02}}
 & \spbase{3.10\sdev{0.13}} & \spbase{2.76\sdev{0.08}}
 & \spbase{3.26\sdev{0.07}} \\
\rowcolor[HTML]{E6E6FF}
 & Ours
 & \spgain{3.24\sdev{0.03}}{36.8} & \spgain{3.60\sdev{0.07}}{29.4}
 & \spgain{3.80\sdev{0.02}}{22.9} & \spgain{3.58\sdev{0.02}}{29.9}
 & \spgain{4.32\sdev{0.03}}{32.6} \\

\bottomrule
\end{tabularx}
\caption{Sampling decoding speedups ($\times$) over AR at $T=1$ and top-$p=0.9$, reporting mean $\pm$ sample standard deviation over seeds 0, 42, and 271828. Parentheses show average throughput changes relative to LoopSpec under three seeds. Colors and boldface follow Table~\ref{tab:greedy_speedup}.}
\label{tab:sampling_speedup}
\end{table*}

\paragraph{Metrics.}
We report speedups of decoding throughput over AR decoding throughput. Timing includes drafting, scheduling, and verification, but excludes prefill, model loading, and warmup. 
We also report average loops per output token relative to AR, $\widetilde D=S_{\mathrm{rec}}/(R N_{\mathrm{out}})$, and average candidate advances per output token relative to AR, $\widetilde W=W/(R N_{\mathrm{out}})$. Here $R$ is the full target depth, $N_{\mathrm{out}}$ excludes the first output token produced during prefill, $S_{\mathrm{rec}}$ counts the total number of loops, and $W$ sums the number of candidates advanced in each loop. The average number of candidates advanced per loop is $\bar B=W/S_{\mathrm{rec}}$.

\subsection{Evaluation Results}
\label{sec:evaluation}
Under greedy decoding, LoopAhead outperforms LoopSpec and our DFlash adaptation on all 30 model--benchmark pairs while reproducing AR outputs, as shown in Table~\ref{tab:greedy_speedup}. Speedups over AR range from $2.82\times$ to $7.82\times$, with the highest on Raven-TinyLlama/HumanEval. The largest additional gains over LoopSpec occur on mathematical or general reasoning tasks for every model, reaching 79.2\% on Ouro-2.6B/BBH and 96.2\% on Raven-TinyLlama/MATH500. The speedups also extend to HumanEval and MBPP, whose prompts are not used for drafter training, demonstrating LoopAhead's ability to generalize to code-generation benchmarks not used for drafter training.

Under sampling at $T=1$ and top-$p=0.9$, LoopAhead remains the fastest method on all ten pairs, achieving mean speedups of $3.00$--$4.32\times$ over AR and throughput gains of 16.2--36.8\% over LoopSpec as shown in Table~\ref{tab:sampling_speedup}. The gains on reasoning tasks persist under sampling, exceeding 28\% on both MATH500 and BBH for both models. Results are consistent across three seeds, with LoopAhead speedup standard deviations of $0.01$--$0.07\times$. Further 95\% interval analysis under both settings is provided in Appendix~\ref{app:experiment:ci}.

\begin{figure}[t]
    \centering
    \begin{subfigure}[t]{0.485\linewidth}
        \centering
        \includegraphics[width=\linewidth]{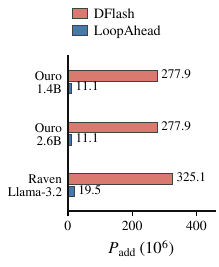}
        \caption{Drafter parameters}
        \label{fig:dflash-params}
    \end{subfigure}
    \hfill
    \begin{subfigure}[t]{0.485\linewidth}
        \centering
        \includegraphics[width=\linewidth]{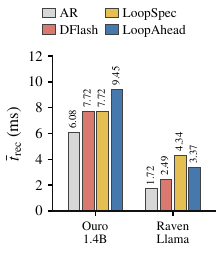}
        \caption{Time per loop}
        \label{fig:dflash-time}
    \end{subfigure}
    \caption{Drafter size and per-loop decode cost. (a) Learned parameters on Ouro-1.4B, Ouro-2.6B, and Raven-Llama-3.2, excluding shared target weights and including R4 residual for Raven, $P_{\mathrm{add}}$. (b) Average decode time per loop, including drafting and scheduling, on Ouro-1.4B and Raven-Llama-3.2, pooled over five tasks with 3,023 requests per model, $\bar t_{\mathrm{rec}}=T_{\mathrm{dec}}/S_{\mathrm{rec}}$.}
    \label{fig:dflash-comparison}
\end{figure}

\subsection{Ablation and Analysis}
\label{sec:ablation}

\begin{table*}[t]
\centering
\footnotesize
\setlength{\tabcolsep}{3.5pt}
\renewcommand{\arraystretch}{1.10}
\begin{tabular*}{\textwidth}{@{\extracolsep{\fill}}lccccccccc@{}}
\toprule
Model & LAD & Sched. & Susp. & Vocab. & GSM8K & MATH500 & HumanEval & MBPP & BBH \\
\midrule
\multirow{6}{*}{Ouro-1.4B}
 & $\times$ & $\times$ & $\times$ & --- & 2.88$\times$ & 2.60$\times$ & 3.24$\times$ & 2.90$\times$ & 2.60$\times$ \\
 & $\times$ & $\checkmark$ & $\checkmark$ & --- & 2.87$\times$ & 2.60$\times$ & 3.23$\times$ & 2.89$\times$ & 2.59$\times$ \\
 & $\checkmark$ & $\times$ & $\times$ & $\times$ & 2.89$\times$ & 2.38$\times$ & 2.52$\times$ & 2.60$\times$ & 2.78$\times$ \\
 & $\checkmark$ & $\checkmark$ & $\checkmark$ & $\times$ & 4.13$\times$ & 3.53$\times$ & 3.90$\times$ & 3.70$\times$ & 4.03$\times$ \\
 & $\checkmark$ & $\checkmark$ & $\times$ & $\checkmark$ & 3.86$\times$ & 3.19$\times$ & 3.43$\times$ & 3.45$\times$ & 3.71$\times$ \\
 & $\checkmark$ & $\checkmark$ & $\checkmark$ & $\checkmark$ & \textbf{4.26}$\times$ & \textbf{3.61}$\times$ & \textbf{4.01}$\times$ & \textbf{3.80}$\times$ & \textbf{4.13}$\times$ \\
\midrule
\multirow{6}{*}{Raven-Llama-3.2}
 & $\times$ & $\times$ & $\times$ & --- & 4.32$\times$ & 4.42$\times$ & 4.68$\times$ & 4.53$\times$ & 4.37$\times$ \\
 & $\times$ & $\checkmark$ & $\checkmark$ & --- & 4.30$\times$ & 4.42$\times$ & 4.67$\times$ & 4.52$\times$ & 4.36$\times$ \\
 & $\checkmark$ & $\times$ & $\times$ & $\times$ & 3.82$\times$ & 3.56$\times$ & 3.75$\times$ & 3.63$\times$ & 4.01$\times$ \\
 & $\checkmark$ & $\checkmark$ & $\checkmark$ & $\times$ & 5.44$\times$ & 5.74$\times$ & 5.59$\times$ & 5.56$\times$ & 5.91$\times$ \\
 & $\checkmark$ & $\checkmark$ & $\times$ & $\checkmark$ & 4.79$\times$ & 4.61$\times$ & 4.80$\times$ & 4.67$\times$ & 5.04$\times$ \\
 & $\checkmark$ & $\checkmark$ & $\checkmark$ & $\checkmark$ & \textbf{5.79}$\times$ & \textbf{6.09}$\times$ & \textbf{5.97}$\times$ & \textbf{5.93}$\times$ & \textbf{6.23}$\times$ \\
\bottomrule
\end{tabular*}
\caption{Greedy decoding speedups ($\times$) over AR on 428 requests per model. Sched. denotes our full scheduling framework, with conflict-based suspension controlled separately by Susp. Vocab. denotes draft vocabulary reduction with 8,192 base tokens. Each model's last two rows are measured together, varying only suspension with budgets, admission rules, and priorities fixed. Native proposal depths are R1/R2 for Ouro and R4/R8 for Raven. Bold marks the best result per model and task.}
\label{tab:component_ablation}
\end{table*}

We conduct ablation studies using 428 held-out requests across five benchmarks per model to answer the following questions.

\paragraph{Q1: Is our method a simple combination of the LAD drafter and the scheduling policy?}\mbox{}\\
Table~\ref{tab:component_ablation} and Figure~\ref{fig:scheduling} show that LAD alone can slow decoding, while combining it with scheduling gives substantial gains and reduces candidate advances spent on discarded nodes for both models. With budgets, admission rules, and priorities fixed, enabling conflict-based suspension improves speedup on every benchmark for both models. This shows that feedback helps use the execution budget more effectively. LAD and scheduling therefore serve complementary roles, generating candidates early and controlling their computation.

\begin{figure}[!t]
\centering

\begin{minipage}[t]{0.485\linewidth}
    \centering
    \includegraphics[width=\linewidth]
{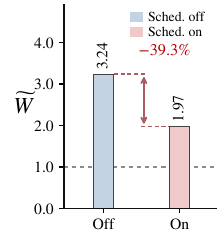}

    \vspace{-1.5mm}
{\fontsize{8}{9.5}\selectfont (a) Ouro-1.4B\par}

    \vspace{1.0mm}

    \begingroup
    \fontsize{7.5}{9}\selectfont
    \setlength{\tabcolsep}{1.3pt}
    \renewcommand{\arraystretch}{1.10}
    \begin{tabular*}{\linewidth}
        {@{\extracolsep{\fill}}lrr@{}}
        \toprule
        Metric & Off & On \\
        \midrule
        $\widetilde W_{\mathrm{disc}}$
        & 2.19
        & \textbf{0.93} \\

        $D$
        & 0.725
        & \textbf{0.656} \\

        $\bar B$
        & 17.87
        & \textbf{11.99} \\

        thr
        & 108.2
        & \textbf{160.4} \\
        \bottomrule
    \end{tabular*}
    \endgroup
\end{minipage}
\hfill
\begin{minipage}[t]{0.485\linewidth}
    \centering
    \includegraphics[width=\linewidth]
{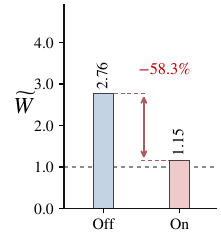}

    \vspace{-1.5mm}
{\fontsize{8}{9.5}\selectfont (b) Raven-Llama-3.2\par}

    \vspace{1.0mm}

    \begingroup
    \fontsize{7.5}{9}\selectfont
    \setlength{\tabcolsep}{1.3pt}
    \renewcommand{\arraystretch}{1.10}
    \begin{tabular*}{\linewidth}
        {@{\extracolsep{\fill}}lrr@{}}
        \toprule
        Metric & Off & On \\
        \midrule
        $\widetilde W_{\mathrm{disc}}$
        & 1.72
        & \textbf{0.13} \\

        $D$
        & 3.867
        & \textbf{2.709} \\

        $\bar B$
        & 22.86
        & \textbf{13.62} \\

        thr
        & 70.6
        & \textbf{109.8} \\
        \bottomrule
    \end{tabular*}
    \endgroup
\end{minipage}

\vspace{-1mm}

\caption{
Scheduling results with LAD and draft vocabulary reduction held fixed over 428 questions per model across five benchmarks.
Bars show $\widetilde W=W/(R N_{\mathrm{out}})$, the average number of candidate advances per output token relative to AR, and the dashed line denotes AR. Red annotations indicate the percentage reduction in candidate advances when scheduling is enabled. The tables report candidate advances on subsequently discarded nodes normalized by the same AR reference, $\widetilde W_{\mathrm{disc}}=W_{\mathrm{disc}}/(R N_{\mathrm{out}})$; average loops per output token, $D=S_{\mathrm{rec}}/N_{\mathrm{out}}$; average candidates advanced per loop, $\bar B=W/S_{\mathrm{rec}}$; and throughput (thr, in tokens/s).
}
\label{fig:scheduling}

\end{figure}

\paragraph{Q2: How much parallel computation is needed?}
Figure~\ref{fig:mechanism-stats} shows that LoopAhead reduces average loops per output token relative to AR from $0.285$ to $0.162$ on Ouro-1.4B compared with LoopSpec, while increasing average candidate advances per output token relative to AR from $1.14$ to $1.91$. However, Raven-Llama and Raven-OLMo achieve higher throughput while advancing fewer candidates per loop and making $17.7\%$ and $30.6\%$ fewer candidate advances per output token, respectively. LoopAhead also makes fewer candidate advances per output token than DFlash on all four models shown. Thus, wider parallelism and selective execution offer two routes to faster decoding, with the latter reducing target computation on Raven.

\paragraph{Q3: Does each loop become more expensive due to the introduction of the draft model?}\mbox{}\\
As shown in Figure~\ref{fig:dflash-comparison}, average decode time per loop increases from $7.72$ to $9.45$ ms on Ouro-1.4B compared with LoopSpec, but decreases from $4.34$ to $3.37$ ms on Raven-Llama. Ouro compensates for the higher cost with fewer loops per output token, while Raven benefits from cheaper execution. The drafter adds 11.1M parameters on Ouro, compared with 277.9M for DFlash. On Raven-Llama, LAD and its residual adapter together contain $19.5$M parameters, compared with $325.1$M for DFlash, keeping the additional model storage small. The draft model adds overhead, but fewer loops or candidate advances can more than offset it.

\FloatBarrier
\endgroup
\setcounter{topnumber}{2} \setcounter{dbltopnumber}{2}

\section{Conclusion}

We present \method as a speculative decoding framework for looped language models that uses early-loop states to predict future candidates and intermediate feedback to guide their execution. Experiments show that our framework achieves consistent speedups over AR and LoopSpec under both greedy and stochastic decoding, while remaining lossless. 
The execution statistics reveal model-dependent tradeoffs between parallel computation and speedups. LoopAhead's key value lies in exploiting early predictions and translating them into useful computation, ultimately delivering substantial inference speedups.


\section*{Limitations}

We discuss three main limitations of our work:
LoopAhead currently requires model-specific adaptations in drafter training and scheduling. Models with fewer recurrent iterations and those with more iterations differ in their intermediate representations and the depths at which useful feedback becomes available. We therefore adapt the drafter and select feedback depths and scheduling settings for each model family.	Applying LoopAhead to a new architecture requires drafter training and scheduling choices based on development data.

Although LoopAhead consistently accelerates decoding in our evaluations, its gains over the training-free LoopSpec baseline are modest on some model--benchmark pairs. Task type and difficulty may affect how reliably the drafter predicts several future tokens. When these predictions are less accurate, additional lookahead can introduce verification work without producing enough accepted tokens to offset its cost. Scheduling helps control this work but cannot fully compensate for poor drafts. The additional training effort therefore does not yield equally large improvements on every task, and understanding these task-dependent differences requires further study.

Our scheduler responds dynamically to intermediate-loop feedback, but its rules and budgets are fixed in advance. Our evaluation uses one request per GPU. Concurrent serving introduces changing request lengths, GPU load, and memory pressure. Wider execution may also become less beneficial when the GPU is already busy. Future work can adapt drafting and execution budgets to candidate quality and changing serving workloads.






\bibliography{references}


\clearpage
\appendix

\section{Experimental Setting}
\label{app:setting}

\subsection{Models}
\label{app:setting:models}

We evaluate six models from the Ouro~\citep{scalinglatent} and Raven~\citep{teaching} families. Table~\ref{tab:app:checkpoints} lists the checkpoint identifiers and the revisions recorded in our experiment configurations. All target-model parameters remain frozen during drafter training and inference.

\begin{table*}[t]
\centering
\footnotesize
\setlength{\tabcolsep}{4pt}
\begin{tabular*}{\textwidth}{@{\extracolsep{\fill}}lll@{}}
\toprule
Model & Hugging Face checkpoint & Revision \\
\midrule
Ouro-1.4B
& \texttt{ByteDance/Ouro-1.4B}
& \texttt{574fa66cb8bf} \\
Ouro-2.6B
& \texttt{ByteDance/Ouro-2.6B}
& \texttt{1ed04250da1a} \\
Ouro-1.4B-Thinking
& \texttt{ByteDance/Ouro-1.4B-Thinking}
& \texttt{3aaa2224253a} \\
Raven-Llama-3.2
& \texttt{smcleish/Recurrent-Llama-3.2-train-recurrence-32}
& \texttt{a5f6f126e9d9} \\
Raven-OLMo-2
& \texttt{smcleish/Recurrent-OLMo-2-0425-train-recurrence-32}
& \texttt{e1208e9af61d} \\
Raven-TinyLlama
& \texttt{smcleish/Recurrent-TinyLlama-3T-train-recurrence-32}
& \texttt{2579b6a0e983} \\
\bottomrule
\end{tabular*}
\caption{Target checkpoints. Revisions are abbreviated to their first 12 hexadecimal characters.}
\label{tab:app:checkpoints}
\end{table*}

\paragraph{Ouro.}
We use Ouro-1.4B, Ouro-2.6B, and Ouro-1.4B-Thinking \cite{scalinglatent}. The 1.4B checkpoints contain 24 shared Transformer layers, while Ouro-2.6B contains 48. Their hidden size is 2,048, feed-forward width is 5,632, and attention uses 16 query heads and 16 key/value heads with head dimension 128. The vocabulary contains 49,152 tokens. We fix the target computation to four recurrent passes and disable adaptive early exit. 

\paragraph{Raven.}
We use the recurrence-32 checkpoints \cite{teaching} retrofitted from Llama-3.2 \cite{llama3.2}, OLMo-2-0425 \cite{olmo}, and TinyLlama-3T \cite{tinyllama}. All three targets execute 32 recurrent passes while retaining their checkpoint-specific prelude, recurrent core, and coda. For example, Raven-Llama-3.2 has four prelude layers, six shared core layers and four coda layers. Raven-TinyLlama has eight layers in its recurrent core. Intermediate proposals are obtained through the corresponding coda readout, rather than by directly applying the vocabulary projection to an unfinished recurrent latent. 

\paragraph{Tokenization and thinking mode.}
Inputs are tokenized separately with each checkpoint's tokenizer. Base-model prompts use \texttt{add\_special\_tokens=False}, without an additional automatically inserted beginning-of-sequence token. For Ouro-1.4B-Thinking, we wrap the task prompt in the released chat template with \texttt{enable\_thinking=True} and a generation prompt. Generation stops when the end-of-sequence token is produced or the task-specific output budget is reached. Task-specific completion delimiters are disabled for this checkpoint to avoid interrupting its reasoning.

\subsection{Datasets}
\label{app:setting:datasets}

\paragraph{Evaluation benchmarks.}
We evaluate five benchmarks covering mathematical reasoning, code generation, and general reasoning. GSM8K~\citep{gsm8k} contains grade-school mathematical word problems. We use all 1,319 test questions with three fixed chain-of-thought demonstrations. MATH500~\citep{MATH500} contains 500 competition-level mathematics problems from the MATH test set, covering several mathematical subjects and difficulty levels. We use four fixed Minerva-style demonstrations, while drafter training draws from the separate 7,500-example MATH training split~\citep{MATH}. HumanEval~\citep{humaneval} contains 164 Python function-completion problems with function signatures and specifications. MBPP~\citep{MBPP} contains short Python programming tasks, and we use its 500 test examples with task identifiers 11--510. Both code benchmarks use zero-shot prompts. HumanEval retains the original function specification, while MBPP includes the task description and first public example assertion. BBH~\citep{BBH} covers 27 challenging reasoning subtasks. We use 20 held-out questions per subtask, giving 540 evaluation questions, with three fixed chain-of-thought demonstrations for each subtask. Each subtask also contributes 75 training, 10 development, and 10 confirmation questions to the drafter data. These partitions are disjoint, and demonstration questions are excluded from data selection. Table~\ref{tab:app:benchmarks} lists sample counts, prompt formats, and output budgets.

\begin{table}[t]
\centering
\small
\setlength{\tabcolsep}{3.5pt}
\begin{tabular}{@{}lrcr@{}}
\toprule
Benchmark & Examples & Prompt & Max. tokens \\
\midrule
GSM8K     & 1,319 & 3-shot CoT & 512 \\
MATH500   & 500   & 4-shot     & 2,048 \\
HumanEval & 164   & 0-shot     & 1,024 \\
MBPP      & 500   & 0-shot     & 1,024 \\
BBH       & 540   & 3-shot CoT & 1,024 \\
\bottomrule
\end{tabular}
\caption{Benchmark settings for our evaluation experiments, including example numbers, prompt format and output budgets.}
\label{tab:app:benchmarks}
\end{table}

\paragraph{Additional evaluation benchmarks.}
The supplementary experiments cover CNN/DailyMail~\citep{CNN}, AlpacaEval~\citep{alpacaeval}, IFEval~\citep{IFEval}, and MT-Bench~\citep{MTBench}. CNN/DailyMail pairs news articles with summary highlights and provides our news-summarization prompts. AlpacaEval contains diverse user instructions, while IFEval contains instructions with verifiable requirements such as length and formatting constraints. We select 128 prompts from each of these three datasets. For MT-Bench, we use the first turns of 38 questions from writing, roleplay, extraction, and humanities. The maximum output lengths are 256, 512, 1,024, and 512 tokens for the four datasets, respectively. All evaluated methods use the same selected prompts and output budgets. Results appear in Appendix~\ref{app:experiment:general-generation}. For these tasks, Ouro uses its released chat template and Raven uses completion prompts.

\paragraph{Mixed-domain drafter data.}
All teacher trajectories for training LAD drafters are based on the same prompt pool in Table~\ref{tab:app:training-data}. The sources are GSM8K training questions, Python-filtered CodeAlpaca \cite{CodeAlpaca} instructions, the original MATH training split, the disjoint BBH partitions described above, and the \texttt{train\_sft} split of UltraChat~\citep{ultrachat}. CodeAlpaca examples explicitly targeting other programming languages are excluded. For UltraChat, we retain the first user request rather than using the supplied assistant response as the training target. We construct three disjoint prompt sets containing 9,025 training, 782 development, and 782 confirmation prompts. The development set is used to select the checkpoint with the highest average accepted prefix length across the five domains. The selected checkpoint is then evaluated once on the confirmation set, which is not used for checkpoint selection.

\begin{table}[t]
\centering
\small
\setlength{\tabcolsep}{4pt}
\begin{tabular}{@{}lrrr@{}}
\toprule
Source & Train & Dev. & Confirm. \\
\midrule
GSM8K train       & 2,000 & 128 & 128 \\
Python CodeAlpaca & 2,000 & 128 & 128 \\
MATH train       & 2,000 & 128 & 128 \\
BBH custom split & 2,025 & 270 & 270 \\
UltraChat        & 1,000 & 128 & 128 \\
\midrule
Total            & 9,025 & 782 & 782 \\
\bottomrule
\end{tabular}
\caption{Training, development, and confirmation prompts used to collect teacher trajectories for model-specific LAD drafters.}
\label{tab:app:training-data}
\end{table}

\paragraph{Data isolation.}
We protect evaluation questions and few-shot demonstrations before selecting training, development, and confirmation requests. Exact duplicates are detected after Unicode NFKC normalization, case normalization, and whitespace normalization. Near-duplicate filtering uses token 5-gram Jaccard similarity of at least 0.9 with at least eight shared 5-grams, while retaining punctuation and mathematical operators. For BBH, 5-grams shared by at least 20\% of the examples in the same subtask are ignored when computing similarity to reduce the influence of repeated task instructions. These checks remove exact and near-duplicate questions across the data partitions.

\subsection{Baselines}
\label{app:setting:baselines}

\paragraph{Autoregressive decoding.}
The AR baseline completes all recurrent passes before producing the next token. It uses the same target checkpoint, tokenizer, prompts, and generation settings as the speculative methods. AR provides the full-depth reference for greedy output checks and the denominator for the reported decoding throughput speedups.

\paragraph{LoopSpec.}
Our principal baseline is the training-free, dual-proposal LoopSpec method~\citep{loopspec}. Its first and second proposal depths are $(1,2)$ for Ouro and $(2,8)$ for Raven, with final verification at depths 4 and 32, respectively. We retain its gated second proposal, cascading verification and reuse of accepted speculative descendants, and reproduce the baseline's scheduling semantics from the official implementation and align its target backend with LoopAhead. 

\begin{table*}[t]
\centering
\small
\setlength{\tabcolsep}{5pt}
\begin{tabular*}{\textwidth}
{@{\extracolsep{\fill}}lcccccccc@{}}
\toprule
Model group & $d_1$ & $d_2$ & $R$ & $K$
& $B_{\mathrm{ret}}$ & $B_{\mathrm{exe}}$ & $d_{\mathrm{guard}}$ & Stride \\
\midrule
All Ouro checkpoints
& 1 & 2 & 4 & 3 & 32 & 16 & 2 & 1 \\
Raven-Llama / OLMo
& 4 & 8 & 32 & 2 & 32 & 16 & 1 & 2 \\
Raven-TinyLlama
& 2 & 8 & 32 & 3 & 48 & 32 & 1 & 2 \\
\bottomrule
\end{tabular*}
\caption{Greedy LoopAhead execution parameters.}
\label{tab:app:runtime}
\end{table*}

\paragraph{DFlash.}
We train a local adaptation of DFlash~\citep{dflash} for each of the six targets, using frozen target features and the mixed-domain training trajectories. The five-layer drafter is trained for 10,000 updates with batch size 32, 32 sampled anchors per sequence, learning rate $6\times10^{-4}$, maximum sequence length 3,072, and training block size 16. The masked-block cross-entropy uses positional decay scale 7. We evaluate checkpoints every 500 updates on 40 fixed development prompts, with eight from each training domain. We select the checkpoint with the highest mean accepted-prefix length across domains, counting the anchor token.
Serving block size is selected from $\{4,8,16\}$ on 20 fixed development prompts separately for greedy decoding and sampling. 
Our implementation of DFlash uses the same target checkpoints, saved evaluation inputs, recurrent depths, and stopping rules as the other methods. We use hidden states from five layers of the target network unrolled to its full recurrent depth. For $L_{\mathrm{target}}$ layer executions, the selected zero-based indices are $\operatorname{round}(1+j(L_{\mathrm{target}}-4)/4)$ for $j=0,\ldots,4$. Raven's layer count includes the prelude and final coda. The selected states are concatenated, linearly projected, and normalized, then supplied to the key and value projections of every drafter layer. 
For Ouro-1.4B and Ouro-2.6B, the best checkpoints occur at updates 5,500 and 6,000, with development accepted-prefix lengths of 4.59 and 4.84, compared with 4.35 and 4.74 at update 10,000. Ouro-1.4B-Thinking selects the final checkpoint, with its accepted-prefix length increasing only from 3.500 at update 8,000 to 3.503 at update 10,000. For Raven-Llama, Raven-OLMo, and Raven-TinyLlama, the selected checkpoints occur at updates 5,000, 9,000, and 5,500, with development accepted-prefix lengths of 4.14, 5.24, and 4.22. At update 10,000, these values are 3.90, 5.19, and 4.03, respectively.

Across the five greedy benchmarks, DFlash commits an average of 2.47--2.94 tokens per verification across the six models, pooling 3,023 requests per model. Ouro-1.4B and Raven-Llama-3.2 commit 2.88 and 2.47 tokens per verification, with draft-token acceptance rates of 27.1\% and 9.9\%, respectively. Their development-selected block sizes are 8 and 16. The committed-token counts include correction or bonus tokens.

\subsection{Additional Experimental Details}
\label{app:setting:additional}

\paragraph{Scheduling-only controls.}
The LAD-off rows in Table~\ref{tab:component_ablation} retain native proposals and the same feedback depths as the corresponding LAD-on configurations. Ouro uses R1/R2, and Raven-Llama uses R4/R8. Figure~\ref{fig:scheduling} instead holds LAD and draft vocabulary reduction fixed and compares scheduling off and on.

\paragraph{Factorial ablation scope.}
The component ablations use a fixed subset of 428 evaluation requests per model, comprising 128 from GSM8K, 64 each from MATH500, HumanEval, and MBPP, and 108 from BBH. This subset is disjoint from the drafter training, development, and confirmation sets. LAD, scheduling, and the compact vocabulary are toggled within this fixed scope. The scheduling switch jointly controls execution limits, conflict suspension, and the associated admission and priority rules. Raven retains R4/R8 proposals and R32 verification even with LAD disabled.

\subsection{Implementation Details}
\label{app:setting:implementation}

\paragraph{Hardware and inference backend.}
Experiments use a server with four NVIDIA A40 GPUs, each with 48\,GB of memory. Each GPU processes one request at a time, and the GPUs run independent experiment shards. Our local implementation uses PyTorch 2.6.0, Triton 3.2.0, Transformers 4.54.1, FlashAttention 2.8.3.post1, and CUDA 12.4. All methods use BF16 target weights and caches, with FP32 accumulation in target linear kernels.

\paragraph{Drafter architecture.}
The base drafter consumes a causal window of 64 actual target states and the embedding of a native anchor token. It contains two causal Transformer layers with hidden size 512, eight attention heads, and feed-forward width 1,536, using RMSNorm and a SiLU-gated feed-forward network. Its outputs are mapped back to the target hidden dimension and read through the frozen target vocabulary projection described in Section~\ref{sec:LAD}. Training supervises up to seven future positions, whereas the deployed extension length is smaller and model dependent (Table~\ref{tab:app:runtime}).

\paragraph{Base-drafter optimization.}
We train each base drafter from scratch for three epochs with cross-entropy supervision over valid future positions. Each epoch samples 32 anchor positions per training request. The microbatch size is eight with eight gradient-accumulation steps, giving an effective batch size of 64. We use AdamW with $(\beta_1,\beta_2)=(0.9,0.95)$, weight decay 0.01, and gradient-norm clipping at 1.0. The learning rate warms up linearly over the first 5\% of updates to $2\times10^{-4}$ and then follows cosine decay toward $2\times10^{-5}$. The training seed is 42.

Development evaluation is performed every 500 optimizer updates and at the final update, using eight fixed anchors per development request. Checkpoint selection maximizes the mean accepted-prefix length across the five development domains, with cross-entropy used to break ties. For each anchor, we count the longest correct prefix consisting of the native anchor followed by up to three drafter predictions, using the full-depth target's greedy trajectory as reference. The selected checkpoint is evaluated once on the separate confirmation split. Target-model parameters remain frozen throughout optimization.

\paragraph{R4 adaptation for Raven.}
Raven-Llama and Raven-OLMo additionally use a token-conditioned residual head with a 64-dimensional bottleneck, adapted to actual R4 features. The previously trained R2 drafter backbone remains frozen. The residual head applies trainable normalization, projects both the drafter hidden representation and the preceding-token embedding into the 64-dimensional latent space, and maps their combined representation to a vocabulary-logit correction. It is part of the deployed drafter, not a second independently executed language model. For a base hidden prediction $h$ and predecessor token embedding $e$, the correction is
\begin{equation}
\begin{aligned}
\mathcal R_\psi(h,e)
&=\mathbf U_\psi\operatorname{SiLU}\bigl(
\mathbf V_\psi\operatorname{RMSNorm}_\psi(h)\\
&\hspace{7.0em}+\mathbf B_\psi e\bigr).
\end{aligned}
\end{equation}
The matrices $\mathbf V_\psi$ and $\mathbf B_\psi$ project their inputs into a 64-dimensional latent space, while $\mathbf U_\psi$ maps the latent representation to a vocabulary-logit correction. At inference time, the frozen backbone, originally trained on R2 features, takes the actual R4 context window $\mathbf H_t^{(4)}$ and the native anchor $a_t$ as input and jointly produces the base hidden predictions $\widetilde h_{t,1:K}$ using the formulation in Section~\ref{sec:LAD}, with $r=4$. The R4-adapted residual head then corrects the corresponding vocabulary logits by conditioning each prediction on its predecessor token, using the native anchor for the first prediction and the previously selected draft token for later predictions:
\begin{equation}
\begin{aligned}
    z_0 &= a_t, \\
    \ell_{t,j}
    &= W_{\mathrm{out}}\widetilde h_{t,j}
       + \mathcal R_\psi
       \bigl(\widetilde h_{t,j},E(z_{j-1})\bigr), \\
    z_j
    &= \arg\max_{v\in\mathcal V_{\mathrm{draft}}}
       \ell_{t,j}(v),
       \qquad j=1,\ldots,K.
\end{aligned}
\label{eq:app:raven-draft-adaptation}
\end{equation}
Here, $z_j$ is the candidate for output position $t+j$, and $\mathcal V_{\mathrm{draft}}$ is the current request's draft vocabulary. The frozen backbone computes all base hidden predictions jointly. Each logit correction then uses the preceding selected draft token.

For Raven-Llama, the base drafter contains 11,051,008 learned parameters and the residual head adds 8,472,576, giving 19,523,584 learned parameters in total. During residual adaptation, the base drafter is frozen and only the residual head is optimized. For each model, this stage uses 100 training and 16 development requests from each of the five domains, totaling 500 and 80 requests, respectively. It additionally collects at most 16 online windows from each of 32 GSM8K training requests. Each training batch contains 25\% online windows and 75\% offline trajectory windows.

The residual head is trained for 600 updates with a microbatch size of four, effective batch size 32, and peak learning rate $5\times10^{-4}$. The objective averages a teacher-forced loss conditioned on teacher-generated predecessor tokens, and a self-rollout loss conditioned on the model's own selected predecessors. The three prediction positions are weighted by $(1,0.75,0.5)$. For the self-rollout term, a position is supervised only if all preceding rollout predictions are correct, so supervision is masked after the first rollout error.
Development selection is performed every 100 updates using self-generated accepted-prefix length. Raven-TinyLlama instead uses its R2 base drafter without the R4 residual head.

\paragraph{Model-specific execution settings.}
Table~\ref{tab:app:runtime} summarizes the adopted greedy LoopAhead configurations. Here, $d_1$ is the native anchor and drafter-feature depth, $d_2$ is the second native proposal depth, $R$ is the final target depth, and $K$ counts additional draft tokens, excluding the native anchor. Stride is the number of loops in each pipeline step.

Ouro uses R2 feedback and prioritizes nodes closer to the committed root. Under sampling, ties are broken by the estimated probability of the candidate path matching the final output. Raven prioritizes paths formed by native proposals, then distance. Raven-Llama/OLMo use R4 feedback followed by R8, while Raven-TinyLlama uses R2 followed by R8. Suspended nodes are reconsidered as feedback or the root changes. When new feedback arrives, Raven-TinyLlama prunes an uncommitted LAD node and its descendants if its token differs from the parent's deepest available prediction and its subtree contains no nodes on native-proposal paths. Their cached states are released.

For Ouro greedy decoding, LAD is called for an eligible anchor when free retention capacity is available. Each call predicts $K=3$ additional tokens and retains the first $\min(K,f)$ predictions, where $f$ is the number of free retention slots.

For Raven, the call and retained lengths are both limited by $K$, free retention slots, spare execution-batch capacity, and the remaining output budget with one position reserved. The capacity threshold used for LAD admission can expand from 8 to 16 for Raven-Llama/OLMo, and from 16 to 20 or 20 to 24 for Raven-TinyLlama. Successive expansions require at least four newly committed tokens. Raven-Llama/OLMo allow the first expansion immediately, while Raven-TinyLlama waits until four output tokens are committed. These thresholds control draft admission, while $B_{\mathrm{exe}}$ limits the number of nodes advanced in each batch.

For Ouro sampling, we calibrate candidate quality and execution costs on 16 development prompts, with eight each from GSM8K and CodeAlpaca. Let $p_j$ estimate the probability that the candidate path through draft position $j$ matches the final output. Before drafting, this estimate uses draft position and distance from the root. After drafting, it also uses token confidence over the draft vocabulary, grouped into four equal-width bins. The call length $\ell$ and retained length $k$ maximize
\begin{equation}
\begin{aligned}
U_{\mathrm{call}}(\ell) &= t_{\mathrm{save}}\sum_{j=1}^{\ell}p_j-t_{\mathrm{head}}-\ell t_{\mathrm{node}},\\
U_{\mathrm{keep}}(k) &= t_{\mathrm{save}}\sum_{j=1}^{k}\widehat p_j-k t_{\mathrm{node}},
\end{aligned}
\end{equation}
where $\widehat p_j$ incorporates the observed confidence. Here $t_{\mathrm{save}}$ estimates the time saved per accepted draft token, $t_{\mathrm{head}}$ is the cost of one LAD call, and $t_{\mathrm{node}}$ is the added execution cost per retained candidate. We search $1\le\ell\le\min(K,f)$ and $1\le k\le\ell$, select zero when the maximum is nonpositive, and favor shorter lengths in ties. The calibrated costs are approximately $t_{\mathrm{save}}=7.524$ ms, $t_{\mathrm{head}}=0.602$ ms, and $t_{\mathrm{node}}=0.407$ ms. We fix these estimates across benchmarks and seeds.

\paragraph{Draft vocabulary.}
For each target model, we select the 8,192 most frequent tokens in teacher-generated training outputs as the base draft vocabulary, breaking ties by token ID. At inference, we add tokens from the prompt, the EOS token, and already committed outputs. LAD uses the resulting request-specific vocabulary for its output projection. Native proposals and full-depth verification use the complete target vocabulary.

\paragraph{Decoding settings.}

\begin{figure*}[t]
    \centering
    \begin{minipage}[t]{0.24\textwidth}
        \centering
        \includegraphics[width=\linewidth]{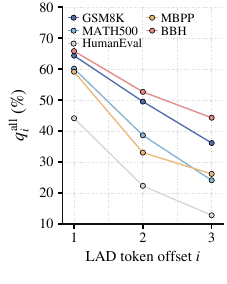}
        \par\vspace{1mm}
{\footnotesize (a) Ouro-1.4B\\All generated\par}
    \end{minipage}\hfill
    \begin{minipage}[t]{0.24\textwidth}
        \centering
        \includegraphics[width=\linewidth]{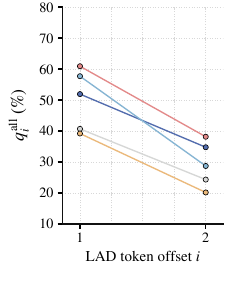}
        \par\vspace{1mm}
{\footnotesize (b) Raven-Llama-3.2\\All generated\par}
    \end{minipage}\hfill
    \begin{minipage}[t]{0.24\textwidth}
        \centering
        \includegraphics[width=\linewidth]{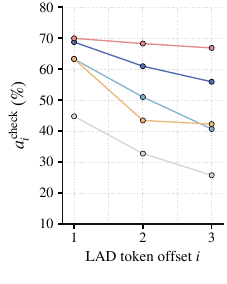}
        \par\vspace{1mm}
{\footnotesize (c) Ouro-1.4B\\Verified\par}
    \end{minipage}\hfill
    \begin{minipage}[t]{0.24\textwidth}
        \centering
        \includegraphics[width=\linewidth]{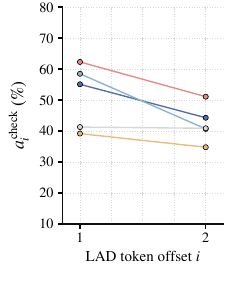}
        \par\vspace{1mm}
{\footnotesize (d) Raven-Llama-3.2\\Verified\par}
    \end{minipage}
\caption{LAD token quality by extra-token offset $i$ on five benchmarks. Panels (a,b) show full-depth agreement rate among all generated candidates $q_i^{\mathrm{all}}$. Panels (c,d) show acceptance rate among candidates checked during verification $a_i^{\mathrm{check}}$.}
    \label{fig:lad_acceptance}
\end{figure*}

Greedy evaluation covers all six checkpoints. Sampling evaluation covers Ouro-1.4B and Raven-Llama-3.2 with temperature $T=1$, top-$p=0.9$, and three sampling seeds $\{0,42,271828\}$. For each seed, every example--method pair is executed once.
For sampling, temperature scaling and nucleus truncation are applied to each native distribution before constructing residual distributions. LAD uses argmax candidates, while native proposals and full-depth verification provide sampling randomness. 

\begin{figure}[t]
    \centering
    \begin{minipage}[t]{0.485\linewidth}
        \centering
        \includegraphics[width=\linewidth]{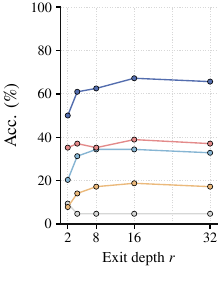}
        \par\vspace{1mm}
{\footnotesize (a) Accuracy\par}
    \end{minipage}\hfill
    \begin{minipage}[t]{0.485\linewidth}
        \centering
        \includegraphics[width=\linewidth]{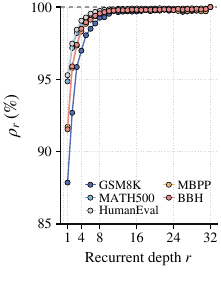}
        \par\vspace{1mm}
{\footnotesize (b) Token agreement\par}
    \end{minipage}
\caption{Raven-Llama-3.2 task accuracy at fixed exit depths and token agreement with full-depth predictions across five benchmarks, supplementing Figure~\ref{fig:overview}.}
    \label{fig:raven_depth_diagnostics}
\end{figure}

\paragraph{Timing and aggregation.}
Models and CUDA graphs are loaded, captured, and warmed up before timed generation. Decode time is measured with a wall-clock timer, with CUDA synchronization before and after the timed region. We measure the iterative decoding phase after prompt prefill and initial request setup. Timing includes learned drafting, native readouts, recurrent verification, sampling, branch scheduling, stopping checks, and decode-time candidate-vocabulary maintenance. It excludes model loading, graph capture, warm-up, prefill, post-run checks, and result serialization.


\begin{figure*}[t]
\centering
\begin{minipage}[t]{.323\textwidth}
\centering
\includegraphics[width=\linewidth]{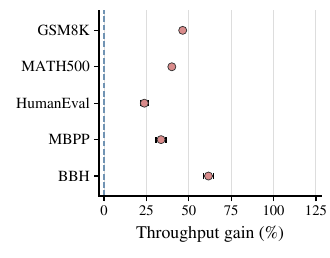}
\par\vspace{0.4mm}
{\footnotesize (a) Ouro-1.4B\par}
\end{minipage}\hfill%
\begin{minipage}[t]{.323\textwidth}
\centering
\includegraphics[width=\linewidth]{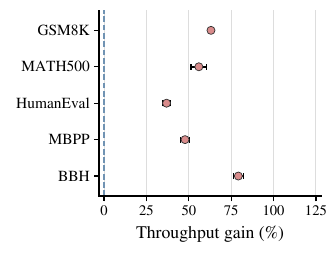}
\par\vspace{0.4mm}
{\footnotesize (b) Ouro-2.6B\par}
\end{minipage}\hfill%
\begin{minipage}[t]{.323\textwidth}
\centering
\includegraphics[width=\linewidth]{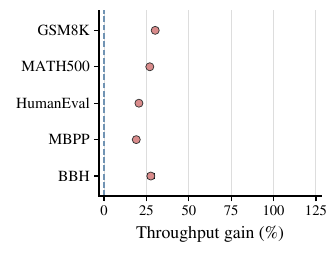}
\par\vspace{0.4mm}
{\footnotesize (c) Ouro-1.4B-Thinking\par}
\end{minipage}%
\par\vspace{2mm}
\begin{minipage}[t]{.323\textwidth}
\centering
\includegraphics[width=\linewidth]{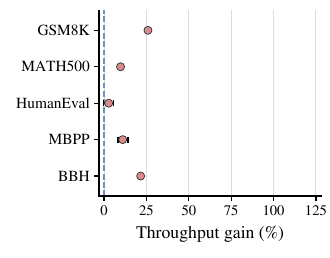}
\par\vspace{0.4mm}
{\footnotesize (d) Raven-Llama-3.2\par}
\end{minipage}\hfill%
\begin{minipage}[t]{.323\textwidth}
\centering
\includegraphics[width=\linewidth]{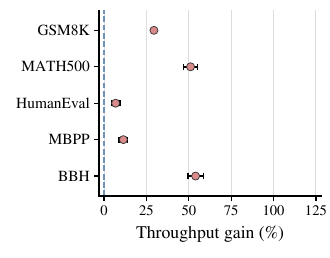}
\par\vspace{0.4mm}
{\footnotesize (e) Raven-OLMo-2\par}
\end{minipage}\hfill%
\begin{minipage}[t]{.323\textwidth}
\centering
\includegraphics[width=\linewidth]{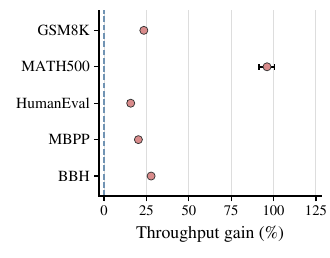}
\par\vspace{0.4mm}
{\footnotesize (f) Raven-TinyLlama\par}
\end{minipage}%
\caption{Greedy throughput gains of LoopAhead over LoopSpec across six models and five benchmarks. Points show observed gains and black bars show 95\% confidence intervals. The dashed line marks equal throughput.}
\label{fig:main-ci-greedy}
\end{figure*}

\begin{figure*}[t]
\centering
\begin{minipage}[t]{.49\textwidth}
\centering
\includegraphics[width=\linewidth]{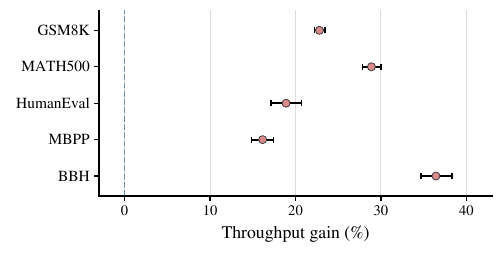}
\par\vspace{0.4mm}
{\footnotesize (a) Ouro-1.4B\par}
\end{minipage}\hfill%
\begin{minipage}[t]{.49\textwidth}
\centering
\includegraphics[width=\linewidth]{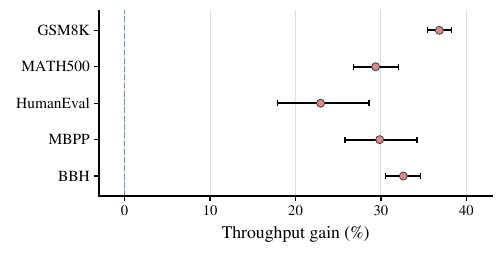}
\par\vspace{0.4mm}
{\footnotesize (b) Raven-Llama-3.2\par}
\end{minipage}%
\caption{Sampling throughput gains of LoopAhead over LoopSpec at $T=1$ and top-$p=0.9$. Points show mean gains over three seeds, and black bars show 95\% confidence intervals. The dashed line marks equal throughput.}
\label{fig:main-ci-sampling}
\end{figure*}

For a fixed model, benchmark, and decoding regime, throughput is computed as
\begin{equation}
\nu_m =
\frac{\sum_{i=1}^{n}N_{\mathrm{out},m,i}}
     {\sum_{i=1}^{n}t^{\mathrm{decode}}_{m,i}},
\label{eq:app:throughput}
\end{equation}
where $N_{\mathrm{out},m,i}$ counts generated tokens in the iterative decoding phase for method $m$ on request $i$. The first generated token is produced during prefill and is excluded from this count.

\section{Supplementary Experiments}
\label{app:experiment}

\subsection{Predictions at Intermediate Depths}
\label{app:experiment:depth}

The depth and LAD diagnostics use the same 310 questions per model, comprising 64 each from GSM8K, MATH500, HumanEval, and MBPP, and 54 from BBH. We measure task accuracy by generating complete answers at a fixed exit depth. Token agreement $\rho_r$ compares predictions at depth $r$ with full-depth predictions on the same AR prefixes.

Figure~\ref{fig:raven_depth_diagnostics} extends the Ouro analysis to Raven-Llama-3.2. Exiting at R2 gives 50.0\% accuracy on GSM8K and 20.3\% on MATH500, compared with 65.6\% and 32.8\% at R32. Token agreement improves rapidly in the early loops, reaching 97.0--99.1\% at R4 and further increasing to 99.3--99.7\% at R8 across the five tasks. These results support intermediate-loop drafting and scheduling with full-depth verification.

\begin{table*}[t]
\centering
\footnotesize
\setlength{\tabcolsep}{2.5pt}
\renewcommand{\arraystretch}{1.10}
\begin{tabularx}{\textwidth}{ll*{4}{>{\centering\arraybackslash}X}}
\toprule
Model & Method & CNN/DailyMail & AlpacaEval & IFEval & MT-Bench \\
\midrule
 & DFlash & \sploss{0.94}{63.4} & \sploss{1.16}{62.6} & \sploss{1.14}{63.2} & \sploss{1.06}{61.6} \\
 & LoopSpec & \spbase{2.56} & \spbase{3.11} & \spbase{3.11} & \spbase{2.75} \\
\rowcolor[HTML]{E6E6FF}
\multirow{-3}{*}{Ouro-2.6B} & Ours & \spgain{2.62}{2.4} & \spgain{3.43}{10.4} & \spgain{3.33}{7.3} & \spgain{2.81}{2.4} \\
\midrule
 & DFlash & \sploss{0.94}{76.5} & \sploss{1.33}{71.2} & \sploss{1.20}{76.7} & \sploss{1.17}{74.0} \\
 & LoopSpec & \spbase{4.01} & \spbase{4.60} & \spbase{5.13} & \spbase{4.51} \\
\rowcolor[HTML]{E6E6FF}
\multirow{-3}{*}{Raven-Llama-3.2} & Ours & \spgain{4.51}{12.3} & \spgain{5.31}{15.4} & \spgain{5.39}{5.0} & \spgain{5.14}{14.0} \\
\midrule
 & DFlash & \sploss{1.05}{73.1} & \sploss{1.34}{74.3} & \sploss{1.49}{72.1} & \sploss{1.33}{70.7} \\
 & LoopSpec & \spbase{3.89} & \spbase{5.22} & \spbase{5.32} & \spbase{4.55} \\
\rowcolor[HTML]{E6E6FF}
\multirow{-3}{*}{Raven-OLMo-2} & Ours & \spgain{4.48}{15.1} & \spgain{5.56}{6.5} & \spgain{5.70}{7.3} & \spgain{5.36}{17.8} \\
\midrule
 & DFlash & \sploss{0.99}{77.7} & \sploss{1.31}{74.5} & \sploss{1.27}{76.8} & \sploss{1.18}{76.1} \\
 & LoopSpec & \spbase{4.46} & \spbase{5.11} & \spbase{5.46} & \spbase{4.93} \\
\rowcolor[HTML]{E6E6FF}
\multirow{-3}{*}{Raven-TinyLlama} & Ours & \spgain{5.28}{18.4} & \spgain{6.05}{18.4} & \spgain{6.37}{16.8} & \spgain{5.77}{17.1} \\
\bottomrule
\end{tabularx}
\caption{Greedy decoding speedups ($\times$) over AR on additional generation tasks, measured using pooled decode-only throughput with one pass per method. We evaluate 128 prompts each from CNN/DailyMail, AlpacaEval, and IFEval, and 38 first-turn MT-Bench prompts from writing, roleplay, extraction, and humanities. Shading, parentheses, and colors follow Table~\ref{tab:greedy_speedup}.}
\label{tab:general-generation-speedup}
\end{table*}

\begin{figure*}[t]
    \centering
    \captionsetup[subfigure]{skip=1pt}
    \begin{subfigure}[t]{0.322\textwidth}
        \centering
        \includegraphics[width=\linewidth]{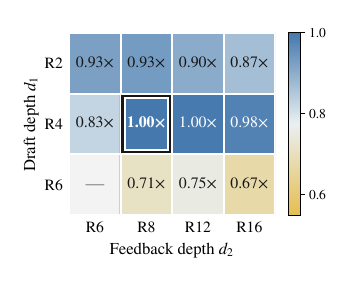}
        \caption{Raven-Llama-3.2}
        \label{fig:raven-depth-llama}
    \end{subfigure}
    \hfill
    \begin{subfigure}[t]{0.322\textwidth}
        \centering
        \includegraphics[width=\linewidth]{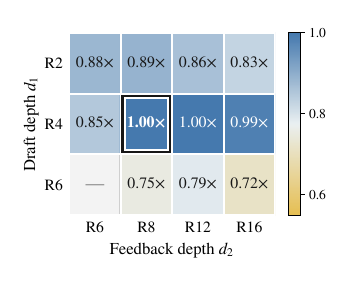}
        \caption{Raven-OLMo-2}
        \label{fig:raven-depth-olmo}
    \end{subfigure}
    \hfill
    \begin{subfigure}[t]{0.322\textwidth}
        \centering
        \includegraphics[width=\linewidth]{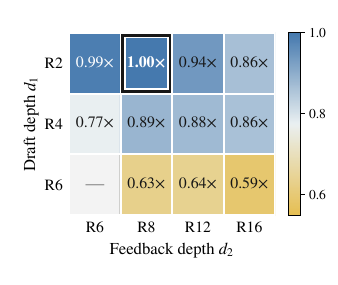}
        \caption{Raven-TinyLlama}
        \label{fig:raven-depth-tinyllama}
    \end{subfigure}
    \caption{Effect of draft depth $d_1$ and intermediate-feedback depth $d_2$ on greedy decoding. Cells show the geometric mean of per-task throughput ratios relative to each model's main configuration. Black boxes mark the configurations chosen for our main results in Section~\ref{sec:evaluation}.}
    \label{fig:raven-depth-sensitivity}
\end{figure*}

\subsection{Quality of LAD Draft Tokens}
\label{app:experiment:lad}

Figure~\ref{fig:lad_acceptance} evaluates each additional position $i$ beyond the native anchor. We report $q_i^{\mathrm{all}}$, the fraction of all generated candidates matching the full-depth prediction under their own prefixes, and $a_i^{\mathrm{check}}$, the fraction accepted among candidates checked against their committed parent's final prediction.

Draft quality decreases at later positions. On Ouro/GSM8K, $q_i^{\mathrm{all}}$ falls from 64.3\% at the first extra token to 36.1\% at the third, while $a_i^{\mathrm{check}}$ falls from 68.7\% to 55.9\%. On Raven/GSM8K, the corresponding values decrease from 51.9\% to 34.8\% and from 55.1\% to 44.3\% over two extra positions. Later drafts remain useful, but their lower quality favors short extensions and selective execution.

\subsection{Confidence Intervals for Main Results}
\label{app:experiment:ci}

We compute pointwise 95\% confidence intervals from the saved full-benchmark results using 10,000 paired-question bootstrap resamples. Each resample keeps methods and sampling seeds paired by question and recomputes their pooled throughputs. BBH is stratified by its 27 subtasks. Sampling gains are averaged across seeds 0, 42, and 271828 after computing the throughput ratio within each seed. These intervals reflect variation across question samples.

Figures~\ref{fig:main-ci-greedy} and~\ref{fig:main-ci-sampling} show throughput gains over LoopSpec, with intervals above zero for 29 of 30 greedy pairs and all ten sampling pairs.

\subsection{Choice of Drafting and Feedback Depths}
\label{app:experiment:depth-choice}

We evaluate 11 combinations of draft depth $d_1$ and intermediate-feedback depth $d_2$ using 428 questions per model from the evaluation set, disjoint from the drafter training, development, and confirmation sets. 
The main configurations were fixed before these sweeps. Full-depth verification remains at R32, and the other inference settings are unchanged. For each target model, all depth configurations share its frozen R2-trained drafter backbone. We use no residual correction at R2, reuse the existing R4 residual heads, and train R6 residual heads following the procedure in Appendix~\ref{app:setting:implementation}.

As shown in Figure~\ref{fig:raven-depth-sensitivity}, Raven-Llama and Raven-OLMo achieve their highest average relative throughput with R4 drafting and R8 feedback, while R4--R12 performs similarly. Raven-TinyLlama instead favors R2--R8. These results support the model-specific drafting depths and shared R8 feedback used in our main experiments. Deeper readouts do not consistently improve speed, since they also delay candidate generation or scheduling feedback.

\subsection{Effect of Draft Length}
\label{app:experiment:length}

\begin{figure}[t]
\centering
\begin{minipage}[t]{0.485\linewidth}
    \centering
    \includegraphics[width=\linewidth]{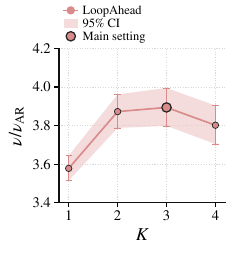}
    \par\vspace{0.5mm}
    {\footnotesize (a) Ouro-1.4B\par}
\end{minipage}\hfill
\begin{minipage}[t]{0.485\linewidth}
    \centering
    \includegraphics[width=\linewidth]{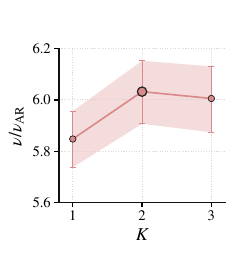}
    \par\vspace{0.5mm}
    {\footnotesize (b) Raven-Llama-3.2\par}
\end{minipage}
\caption{Decode speedup over AR versus additional LAD draft length $K$, pooled across five benchmarks. Larger circles mark the main settings. Shading and error bars show 95\% confidence intervals.}
\label{fig:appendix:draft-length-speedup-ci}
\end{figure}

Using the same 428 requests per model as the component ablation, we vary the number of additional LAD tokens $K$ while keeping the trained drafter and scheduling settings fixed. Throughput is pooled across the five tasks, and confidence intervals use paired question resampling within each task.

In Figure~\ref{fig:appendix:draft-length-speedup-ci}, Ouro reaches its highest measured speedup of $3.89\times$ at $K=3$, close to $3.87\times$ at $K=2$, and falls to $3.80\times$ at $K=4$. Raven improves from $5.85\times$ at $K=1$ to $6.03\times$ at $K=2$, with little change at $K=3$. The benefit of longer drafts therefore levels off at a small $K$, supporting the main settings of three extra tokens for Ouro and two for Raven-Llama.

\subsection{Execution Depth of Discarded Drafts}
\label{app:experiment:discarded}

\begin{figure}[t]
\centering
\begin{minipage}[t]{0.485\linewidth}
    \centering
    \includegraphics[width=\linewidth]{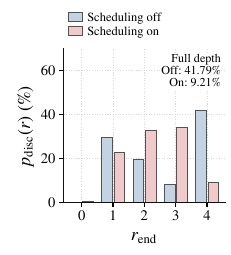}
    \par\vspace{0.5mm}
    {\footnotesize (a) Ouro-1.4B\par}
\end{minipage}\hfill
\begin{minipage}[t]{0.485\linewidth}
    \centering
    \includegraphics[width=\linewidth]{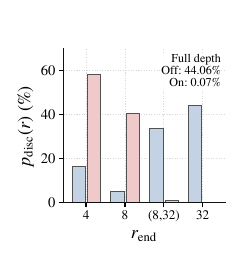}
    \par\vspace{0.5mm}
    {\footnotesize (b) Raven-Llama-3.2\par}
\end{minipage}
\caption{Distribution of discarded LAD candidates by their executed loop depth $r_{\mathrm{end}}$, with scheduling off or on. Annotations show the fraction reaching full depth. The Raven bin $(8,32)$ sums frequencies at depths strictly between 8 and 32.}
\label{fig:appendix:discarded-depth-distribution}
\end{figure}

We compare scheduling on and off using 155 questions per model, comprising 32 each from GSM8K, MATH500, HumanEval, and MBPP, and 27 from BBH. For each discarded LAD candidate, we record its maximum executed loop depth $r_{\mathrm{end}}$. The distribution $p_{\mathrm{disc}}(r)$ pools candidate counts across tasks and normalizes by the total number discarded. Candidates removed by direct rejection or ancestor pruning are included.

Figure~\ref{fig:appendix:discarded-depth-distribution} shows that scheduling shifts discarded candidates toward earlier depths. On Ouro, the fraction reaching R4 before removal drops from 41.79\% to 9.21\%. On Raven, the fraction reaching R32 falls from 44.06\% to 0.07\%, and 98.8\% of discarded candidates execute at most eight loops with scheduling enabled. The scheduler thus limits deep execution on unproductive branches, especially in the model with more loops.

\subsection{Additional Generation Tasks}
\label{app:experiment:general-generation}

We further evaluate Ouro-2.6B and the three Raven models on CNN/DailyMail, AlpacaEval, IFEval, and a first-turn subset of MT-Bench, covering summarization and broader instruction-following tasks. We reuse the main experiments' trained drafters and scheduling settings without further tuning.

Table~\ref{tab:general-generation-speedup} shows that LoopAhead outperforms LoopSpec and our DFlash adaptation across all 16 model--benchmark pairs, achieving $2.62$--$6.37\times$ speedups over AR. Throughput gains over LoopSpec range from $2.4\%$ to $10.4\%$ on Ouro-2.6B and from $5.0\%$ to $18.4\%$ on Raven. Raven-TinyLlama shows the most consistent gains, improving by $16.8$--$18.4\%$ across the four tasks. In comparison, DFlash reaches $0.94$--$1.49\times$ over AR. These results show that LoopAhead's benefits extend beyond the reasoning and code benchmarks in the main evaluation, using the same trained drafters and execution policies.

\section{Proof of Theoretical Analysis}
\label{app:theory}


\subsection{Proof of Theorem~\ref{thm:recursive-filter}}
\label{app:recursive-proof}

\paragraph{Setup.}
Consider a length-$L$ draft chain $x_{1:L}$ following a correct prefix, and let $y_{1:L}^{\star}$ be the full-depth target's greedy continuation from that prefix. Correctness means agreement with this continuation rather than a dataset reference answer. The chain can contain tokens from several recursive LAD calls. Probabilities are taken over requests and their length-$L$ draft chains.

Let $N$ be the number of consecutive matching tokens and set $J=N+1$. The first mismatch is at $J$, with $J=L+1$ when all $L$ tokens match. Define
\begin{equation}
    \alpha_i
    =\Pr\bigl(x_i=y_i^\star
    \mid x_{1:i-1}=y_{1:i-1}^\star\bigr),
    \label{eq:proof-alpha}
\end{equation}
and let $S_i=\Pr(N\ge i)$, $S_0=1$, and $A_L=\mathbb{E}[N]$.

We compare the same candidate workload with suspension disabled and enabled, using the same admissions, budgets, priorities, and cache rules. A candidate advance runs one recurrent loop for one candidate. For the candidate with prefix $x_{1:k}$, let
$c_{\mathrm{off}}(k,d)$ and $c_{\mathrm{on}}(k,d)$ count evaluations of loop $d$ in the two executions. The counts cover the candidate's lifetime, including any resumption. The number of avoided advances on the rejected suffix is
\begin{equation}
    W_{\mathrm{avoid}}
    =\sum_{k=J}^{L}\sum_{d=1}^{R}
    \bigl[c_{\mathrm{off}}(k,d)-c_{\mathrm{on}}(k,d)\bigr]_+,
    \label{eq:proof-avoided-work}
\end{equation}
where $[z]_+=\max(z,0)$. Each candidate is identified by its full token prefix, so the same advance is matched across the two executions.

Fix a feedback depth $r<R$. Let $\hat y_i^{(r)}$ be the native greedy prediction for position $i$ under prefix $x_{1:i-1}$, and write
\begin{equation}
    \gamma_i
    =\Pr\bigl(\hat y_i^{(r)}=y_i^\star\mid J=i\bigr).
    \label{eq:proof-gamma}
\end{equation}
Thus, $\gamma_i$ measures how often intermediate feedback predicts the target token where the draft first goes wrong. Let $\bar G_i$ be a nonnegative lower bound on the expected avoided advances in these cases:
\begin{equation}
    \mathbb{E}\bigl[W_{\mathrm{avoid}}
    \mid J=i,\hat y_i^{(r)}=y_i^\star\bigr]
    \ge\bar G_i.
    \label{eq:proof-conditional-saving}
\end{equation}
This mean accounts for feedback timing, near-root protection, execution budgets, and resumption.

\begin{proof}
\textit{Step 1: Locate the first mismatch.}
The first $i$ tokens match when the preceding $i-1$ tokens match and token $i$ also matches. By the chain rule,
\begin{equation}
    S_i=S_{i-1}\alpha_i=\prod_{j=1}^{i}\alpha_j.
    \label{eq:proof-survival}
\end{equation}
For the first mismatch to occur at $i$, the preceding tokens must match and token $i$ must differ. Hence,
\begin{equation}
\begin{aligned}
    \Pr(J=i)
    &=S_{i-1}-S_i\\
    &=S_{i-1}(1-\alpha_i).
\end{aligned}
    \label{eq:proof-first-mismatch}
\end{equation}
Each position contributes one to $N$ when the prefix through that position matches. Therefore,
\begin{equation}
\begin{aligned}
    N&=\sum_{i=1}^{L}\mathbf{1}\{N\ge i\},\\
    A_L=\mathbb{E}[N]&=\sum_{i=1}^{L}S_i.
\end{aligned}
    \label{eq:proof-prefix-mean}
\end{equation}

\medskip
\noindent\textit{Step 2: Count avoided advances under correct feedback.}
Define the event
\begin{equation}
    E_i=\{J=i,\ \hat y_i^{(r)}=y_i^\star\}.
    \label{eq:proof-feedback-event}
\end{equation}
When $J=i$, all preceding tokens match the target continuation. The intermediate and full-depth predictions therefore share the same prefix. On $E_i$, the draft token differs from the target while feedback agrees with it:
\begin{equation}
    x_i\ne y_i^\star=\hat y_i^{(r)}.
\end{equation}
This gives a feedback conflict on the draft path. By the definition of $\gamma_i$, the event has probability
\begin{equation}
\begin{aligned}
    \Pr(E_i)
    &=\Pr(J=i)\gamma_i\\
    &=S_{i-1}(1-\alpha_i)\gamma_i.
\end{aligned}
    \label{eq:proof-event-probability}
\end{equation}

The events $E_1,\ldots,E_L$ are mutually exclusive because a chain has only one first mismatch. Let $E_{\mathrm{rest}}$ contain the remaining cases. Splitting the expectation over these events gives
\begin{equation}
\begin{aligned}
    \mathbb{E}[W_{\mathrm{avoid}}]
    &=\sum_{i=1}^{L}\Pr(E_i)
      \mathbb{E}[W_{\mathrm{avoid}}\mid E_i]\\
    &\quad+\mathbb{E}\bigl[
      W_{\mathrm{avoid}}\mathbf{1}\{E_{\mathrm{rest}}\}\bigr]\\
    &\ge\sum_{i=1}^{L}
      S_{i-1}(1-\alpha_i)\gamma_i\bar G_i.
\end{aligned}
    \label{eq:proof-filter-bound}
\end{equation}
Here we use Equation~\ref{eq:proof-conditional-saving} for each conditional mean and drop the nonnegative contribution from $E_{\mathrm{rest}}$. Each term has three parts: the probability of a first mismatch at $i$, the probability of correct feedback there,
and a lower bound on the average advances avoided in those cases. This proves the first bound.

\medskip
\noindent\textit{Step 3: Express the bound through the rejected suffix.}
Suppose $\gamma_i\ge\gamma$ and $\bar G_i\ge g(L-i+1)$ for all $i$, where $\gamma,g\ge0$. Substitution gives
\begin{equation}
    \mathbb{E}[W_{\mathrm{avoid}}]
    \ge g\gamma\sum_{i=1}^{L}
    S_{i-1}(1-\alpha_i)(L-i+1).
    \label{eq:proof-weighted-suffix}
\end{equation}
If the first mismatch is at $i$, the rejected suffix contains $L-i+1$ tokens. The sum on the right is therefore its expected length. Since the matching prefix and rejected suffix together contain $L$ tokens,
\begin{equation}
\begin{aligned}
    &\sum_{i=1}^{L}
    S_{i-1}(1-\alpha_i)(L-i+1)\\
    &\qquad=\mathbb{E}[L-N]=L-A_L.
\end{aligned}
    \label{eq:proof-suffix-identity}
\end{equation}
Equivalently, using $S_{i-1}(1-\alpha_i)=S_{i-1}-S_i$,
\begin{equation}
\begin{aligned}
    &\sum_{i=1}^{L}(S_{i-1}-S_i)(L-i+1)\\
    &\qquad=LS_0-\sum_{i=1}^{L}S_i=L-A_L.
\end{aligned}
\end{equation}
Combining these results yields
\begin{equation}
    \mathbb{E}[W_{\mathrm{avoid}}]
    \ge g\gamma(L-A_L),
\end{equation}
which proves the second bound. Here $L-A_L$ gives the expected rejected-suffix length, while $g\gamma$ relates that length to the number of avoided advances.
\end{proof}

\paragraph{Constant match probability.}
When $\alpha_i=\alpha$ at every position, $S_i=\alpha^i$ and the expected matching-prefix length has the closed form
\begin{equation}
    A_L=
    \begin{cases}
        \dfrac{\alpha(1-\alpha^L)}{1-\alpha},
        &0\le\alpha<1,\\[6pt]
        L,&\alpha=1.
    \end{cases}
\end{equation}
Substitution gives the second bound directly in terms of $L$ and $\alpha$.

\paragraph{Multiple feedback depths.}
For a set of feedback depths $\mathcal R_{\mathrm{fb}}$, define
\begin{equation}
\begin{aligned}
    F_i^{\mathrm{multi}}
    &=\bigcup_{r\in\mathcal R_{\mathrm{fb}}}
      \{\hat y_i^{(r)}=y_i^\star\},\\
    \gamma_i^{\mathrm{multi}}
    &=\Pr(F_i^{\mathrm{multi}}\mid J=i).
\end{aligned}
\end{equation}
This event collects cases in which at least one selected depth predicts the target token at the first mismatch. Let
$\bar G_i^{\mathrm{multi}}\ge0$ lower-bound
$\mathbb{E}[W_{\mathrm{avoid}}\mid J=i,F_i^{\mathrm{multi}}]$.
Repeating Step~2 with
$E_i^{\mathrm{multi}}=\{J=i\}\cap F_i^{\mathrm{multi}}$ gives
\begin{equation}
    \mathbb{E}[W_{\mathrm{avoid}}]
    \ge\sum_{i=1}^{L}S_{i-1}(1-\alpha_i)
    \gamma_i^{\mathrm{multi}}\bar G_i^{\mathrm{multi}}.
\end{equation}
Each first-mismatch case is counted once, including when several depths predict correctly.

\paragraph{Recursive drafting and variable lengths.}
The chain $x_{1:L}$ may span several LAD calls: a candidate produced by one call can provide the early-loop states for a later call. The conditional probabilities $\alpha_i$ account for the preceding draft tokens across these calls, so the same proof applies to the resulting recursive chain. An early mismatch can therefore affect drafts from several calls.

When the scheduler produces a bounded random chain length $\Lambda$, apply the theorem conditional on $\Lambda=\ell$. Write $A_\ell=\mathbb{E}[N\mid\Lambda=\ell]$, and define the match probabilities, feedback probabilities, and conditional savings bounds within each length group. If the uniform bounds $g$ and $\gamma$ hold in every group, averaging over lengths gives
\begin{equation}
\begin{aligned}
    \mathbb{E}[W_{\mathrm{avoid}}]
    &\ge g\gamma\sum_{\ell}
      \Pr(\Lambda=\ell)(\ell-A_\ell)\\
    &=g\gamma\bigl(\mathbb{E}[\Lambda]-\mathbb{E}[N]\bigr).
\end{aligned}
\end{equation}
Thus, the same bound uses the expected rejected-suffix length when recursive expansion produces chains of different lengths.

\section{Algorithm Pseudocode}
\label{app:algorithm}
\begin{algorithm*}[!t]
\SetAlFnt{\footnotesize}
\SetAlgoNoEnd
\SetInd{0.5em}{0.8em}
\DontPrintSemicolon
\caption{LoopAhead Inference}
\label{alg:loopahead-inference}

\KwInput{Prompt $x_{1:n}$, target $F_\theta$, trained drafter $\LAdraft^\star$, base draft vocabulary $\mathcal V_{\rm base}$, depths $d_1<d_2<R$, window $w$, extension limit $K$, stride $\Delta$, policy $\pi$, stopping rule $\mathcal S$.}
\KwOutput{Generated tokens $Y$.}

$(\ell_0,\mathcal K)\gets\LAop{Prefill}_R(F_\theta,x_{1:n})$\;
$x\sim\LAProb(\ell_0)$\;
$Y\gets[x]$\;
\If{$\mathcal S(Y)$}{
    \KwRet $Y$\;
}
$b_0\gets\LAop{Spawn}(x_{1:n}\Vert Y)$\;
$\mathcal A\gets\{b_0\}$\;

\While{$\neg\mathcal S(Y)$}{
    $\mathcal W\gets\LAop{SelectWork}_\pi(\mathcal A,b_0)$\;
    $\mathcal N\gets\varnothing$\;
    $\LAop{AdvanceBatch}(\mathcal W;F_\theta,\mathcal K,\Delta)$\;

    \ForEach{$b\in\mathcal W$}{
        \uIf{$r_b=d_1$}{
            $q_1\gets\LAProb(\mathcal C(\mathbf h_b))$\;
            $u_1\sim q_1$\;
            $Q_b\gets[(u_1,q_1)]$\;
            $\mathcal N\gets\mathcal N\cup\{b\}$\;
        }
        \uElseIf{$r_b=d_2$}{
            $\widetilde q_2\gets\LAProb(\mathcal C(\mathbf h_b))$\;
            $(u_1,q_1)\gets Q_b[1]$\;
            \If{$\widetilde q_2(u_1)<q_1(u_1)$}{
                $q_2\gets\widetilde q_2\LAres q_1$\;
                $u_2\sim q_2$\;
                $Q_b\gets Q_b\Vert[(u_2,q_2)]$\;
            }
        }
        \ElseIf{$r_b=R$}{
            $v_b\gets\LAop{NativeVerify}(b,\LAProb(\mathcal C(\mathbf h_b)))$\;
        }
    }

    \If{$\exists b\in\mathcal W,\ r_b\in\{d_1,d_2,R\}$}{
        $\mathcal A\gets\LAop{ReclaimDrafts}_\pi(\mathcal A,b_0)$\;
    }
    $\LAop{FillNativeChildren}(\mathcal A;B_{\mathrm{ret}})$\;

    \While{$r_{b_0}=R$}{
        $x^\star\gets v_{b_0}$\;
        $c\gets\LAop{FindChild}(b_0,x^\star)$\;
        $Y\gets Y\Vert x^\star$\;
        \If{$\mathcal S(Y)$}{
            \KwRet $Y$\;
        }
        $X_{\rm next}\gets X^{(b_0)}\Vert x^\star$\;
        $\mathcal A\gets\LAop{Reroot}(\mathcal A,c;\mathcal K)$\;
        \If{$c=\varnothing$}{
            $c\gets\LAop{Spawn}(X_{\rm next})$\;
            $\mathcal A\gets\{c\}$\;
        }
        $b_0\gets c$\;
    }

    $\LAop{FillNativeChildren}(\mathcal A;B_{\mathrm{ret}})$\;
    $(b,a)\gets\LAop{SelectAnchor}_\pi(\mathcal N,\mathcal A,b_0;w)$\;

    \If{$a\ne\varnothing$}{
        $\ell\gets\LAop{PrecallLength}_\pi(b,a,\mathcal A;K)$\;
        \If{$\ell>0$}{
            $\mathbf H\gets\LAop{ActualWindow}(b,d_1,w)$\;
            $\mathcal V\gets\LAop{DraftVocab}(\mathcal V_{\rm base},x_{1:n},Y)$\;
            $\ell_{\rm call}\gets\LAop{DraftHorizon}_\pi(\ell)$\;
            $\widehat z,\mathbf c\gets\LAop{Draft}(\LAdraft^\star,\mathbf H,\LAop{Token}(a),\ell_{\rm call};\mathcal V)$\;
            $k\gets\LAop{AdmitLength}_\pi(\widehat z,\mathbf c,b,a,\mathcal A;\ell)$\;
            $\mathcal A\gets\LAop{GraftChain}(\mathcal A,a,\widehat z_{1:k})$\;
        }
    }
}
\KwRet $Y$\;
\end{algorithm*}

\begin{figure*}[!t]
    \centering
    \includegraphics[width=\textwidth]{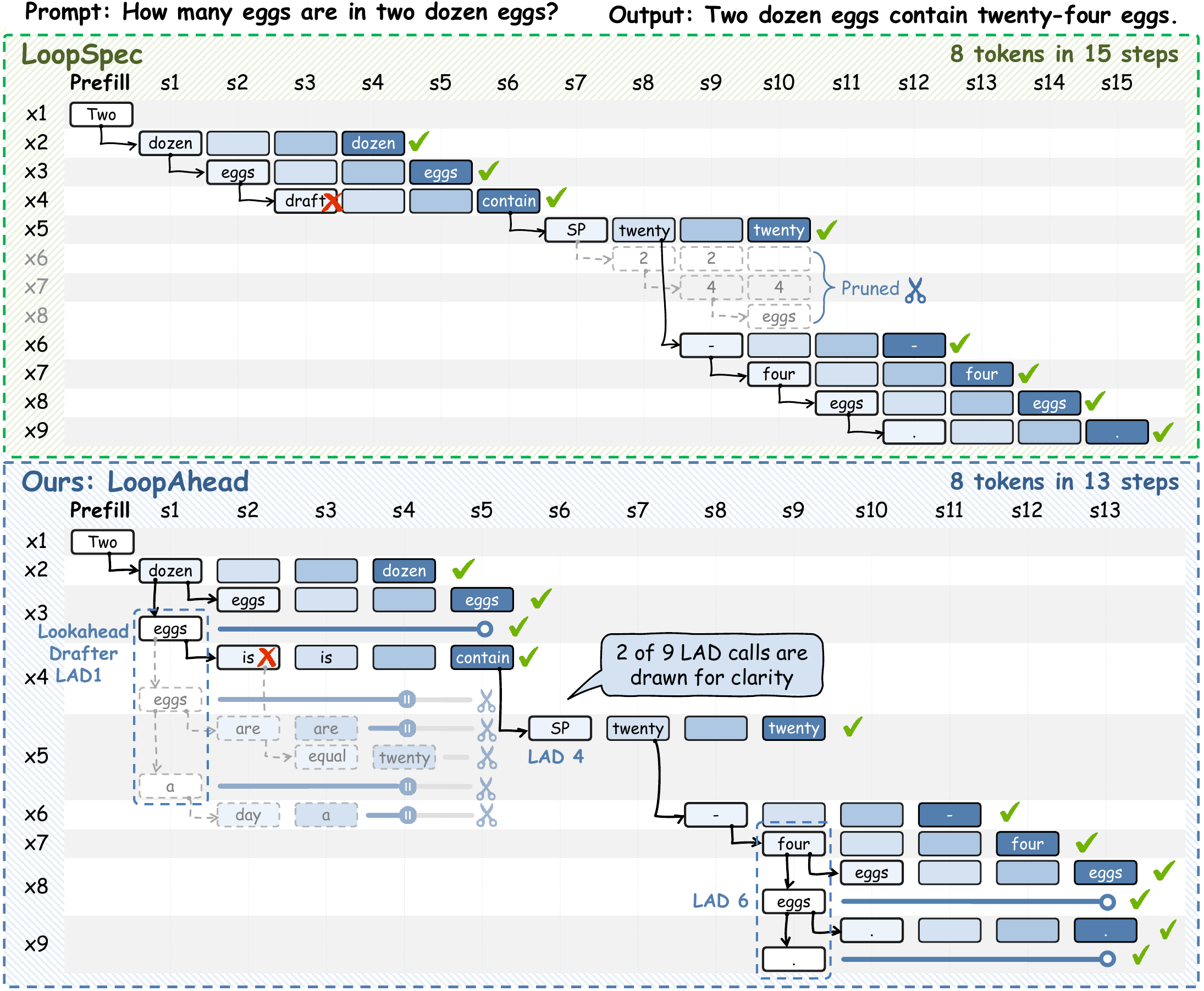}
    \caption{Complete LoopSpec and LoopAhead decoding traces for the example in Figure~\ref{fig:system}, using Ouro-1.4B under greedy decoding. Rows denote output positions and columns denote global loop steps. LoopAhead produces the eight tokens after prefill in 13 steps, compared with 15 for LoopSpec. Two of the nine LAD calls are shown in detail. Colors and symbols follow Figure~\ref{fig:system}.}
    \label{fig:system-appendix}
\end{figure*}

Algorithm~\ref{alg:loopahead-inference} uses the trained drafter $\LAdraft^\star$ and target $F_\theta=(\mathcal P,\mathcal B,\mathcal C)$, whose components are the prelude, recurrent core, and readout. The retained tree is $\mathcal A$, with committed root $b_0$ and KV caches $\mathcal K$. Each node $b$ has state $\mathbf h_b$, depth $r_b$, and token prefix $X^{(b)}$. Its readout predicts the next token. The policy $\pi$ includes priorities, admission and cache rules, budgets $B_{\mathrm{ret}},B_{\mathrm{exe}}$, and guard $d_{\mathrm{guard}}$.

For logits $\ell$, $\LAProb(\ell)$ is a point mass at the argmax for greedy decoding, or the distribution after temperature scaling and top-$p$ truncation for sampling. The ordered list $Q_b$ stores native proposals and their distributions. Starting from the full-depth distribution $p$, $\LAop{NativeVerify}$ accepts each proposal $u\sim q$ with probability $\min\{1,p(u)/q(u)\}$. After rejection, it updates $p\gets p\LAres q$, where
\begin{equation}
\begin{aligned}
    (p\LAres q)(v)
    &=\frac{[p(v)-q(v)]_+}
    {\sum_{u\in\mathcal V_{\mathrm{tgt}}}[p(u)-q(u)]_+},\\
    [a]_+&=\max(a,0).
\end{aligned}
\end{equation}
If all proposals are rejected, it samples from the remaining $p$. 

$\LAop{SelectWork}$ follows Section~\ref{sec:scheduling}, checking candidate paths against parent feedback at the configured depths. $\LAop{AdvanceBatch}$ advances selected nodes by $\Delta$ loops and updates their states, caches, and depths.
$\LAop{ReclaimDrafts}$ applies the additional pruning rule only to Raven-TinyLlama, as described in Appendix~\ref{app:setting:implementation}. $\LAop{FillNativeChildren}$ adds or reuses eligible native-proposal children within $B_{\mathrm{ret}}$.
$\LAop{Reroot}$ preserves committed-prefix KV states and keeps the selected subtree, or clears the speculative tree if the selected child is absent. $\LAop{Spawn}$ creates the missing child with depth zero. $\mathcal S$ checks the output limit and stopping tokens.
$\LAop{SelectAnchor}$ chooses an unexpanded native anchor whose parent newly reached $d_1$. $\LAop{ActualWindow}$ and $\LAop{DraftVocab}$ provide the branch context and request-specific vocabulary. $\LAop{PrecallLength}$, $\LAop{DraftHorizon}$, and $\LAop{AdmitLength}$ determine the allowed draft length, actual call length, and retained prefix length, respectively, following the model-specific admission rules in Appendix~\ref{app:setting:implementation}. $\LAop{GraftChain}$ attaches the retained prefix below the anchor.

\section{Case Study}
\label{app:case}

\paragraph{Complete decoding trace.}
Figure~\ref{fig:system-appendix} shows the full Ouro-1.4B trace for the example in the main system figure. Starting from the anchor \texttt{dozen}, LAD drafts \texttt{eggs}, \texttt{eggs}, and \texttt{a}. Intermediate feedback suspends the conflicting suffix, and full-depth verification commits the first \texttt{eggs} and the correction \texttt{contain} together. A later LAD call also allows the final \texttt{eggs} and period to be committed together. These earlier computations reduce the loop count from 15 with LoopSpec to 13 with LoopAhead.

\paragraph{Code generation.}
Table~\ref{tab:case-code} illustrates both fully matching and partially matching LAD drafts. One call correctly predicts all three future tokens. Another matches \texttt{i} and \texttt{in}, while \texttt{lst} is corrected to \texttt{t}. LoopAhead reduces the loop count from 40 to 23 and improves throughput by 45.2\%. Useful draft prefixes therefore help accelerate generation even when a later draft token is incorrect.

\paragraph{Less work per loop.}
Table~\ref{tab:case-math} shows a different source of acceleration on Raven-Llama-3.2. LoopAhead uses a similar number of loops to LoopSpec (252 versus 248), but reduces candidate advances from 4,786 to 3,100. Decode time falls from 1,008.1 to 775.3 ms, giving a 30.0\% throughput improvement. This case shows the benefit of controlling candidate execution per loop.

\paragraph{Recursive drafting.}
Table~\ref{tab:case-recursive} shows how LAD extends its own drafts on a BBH Boolean-expression task. The R1 state of a drafted \texttt{)} supports another LAD call before full-depth verification. With \texttt{expressions} as the native anchor, this call correctly drafts \texttt{inside}, \texttt{brackets}, and \texttt{are}. LoopAhead completes the request in 55 loops instead of 146, improving throughput by 113.5\% over LoopSpec.

\clearpage

\begingroup
\setcounter{topnumber}{3}
\setcounter{dbltopnumber}{3}
\renewcommand{\topfraction}{0.95}
\renewcommand{\dbltopfraction}{0.95}

\begingroup
\definecolor{caseblue}{HTML}{7093B3}
\definecolor{casecoral}{HTML}{D98B8B}
\definecolor{casegreen}{HTML}{39724D}
\definecolor{casered}{RGB}{180,0,0}
\newcommand{\caseok}[1]{\texttt{#1}\,\textcolor{casegreen}{$\checkmark$}}
\newcommand{\casebad}[1]{\texttt{#1}\,\textcolor{casered}{$\times$}}
\newcommand{\caseanchor}[1]{\textcolor{caseblue!70!black}{\texttt{\textbf{#1}}}}

\begin{table*}[t]
\centering
\small
\setlength{\tabcolsep}{5pt}
\renewcommand{\arraystretch}{1.12}
\begin{tabularx}{\textwidth}{@{}>{\bfseries\raggedright\arraybackslash}p{0.13\textwidth}X@{}}
\toprule
\rowcolor{caseblue!12}[0pt][0pt]
\multicolumn{2}{@{}p{\textwidth}@{}}{\strut\textbf{Case 1: Drafting ahead in code generation}\hfill Ouro-1.4B / MBPP} \\
\midrule
Question & Write a function to find tuples which have all elements divisible by $k$ from the given list of tuples. \\
\addlinespace[5pt]
Output excerpt & {\fontsize{9}{11}\selectfont\ttfamily
def find\_tuples(tuples, k):\par
\ \ \ \ return [t for t in tuples if all(i \% k == 0 for i in t)]} \\
\midrule
LAD drafts & {\setlength{\tabcolsep}{5pt}\renewcommand{\arraystretch}{1.3}
\begin{tabular}[t]{@{}lll@{}}
Context + anchor & LAD proposals & Verification \\
\texttt{def find\_tuples(tuples, }\caseanchor{k}
 & \caseok{):}\;\caseok{$\hookleftarrow$}\;\caseok{return}
 & All three match \\
\texttt{\ldots{} == 0 }\caseanchor{for}
 & \caseok{i}\;\caseok{in}\;\casebad{lst}
 & \texttt{lst} $\rightarrow$ \texttt{t}
\end{tabular}} \\
\midrule
Execution & {\setlength{\tabcolsep}{12pt}
\begin{tabular}[t]{@{}lrr@{}}
Method & Loops & Decode time (ms) \\
\textcolor{caseblue!70!black}{LoopSpec} & 40 & 269.7 \\
\textcolor{casecoral!65!black}{\textbf{LoopAhead}} & \textbf{23} & \textbf{185.8}
\end{tabular}} \\
\midrule
Observation & \textbf{42.5\% fewer loops and 45.2\% higher throughput.} LAD supplies matching continuations ahead of verification, which corrects the mismatched token. \\
\bottomrule
\end{tabularx}
\caption{A greedy code-generation case from MBPP (sample 75). Both methods produce the same output. Blue marks the native anchor. $\checkmark$ and $\times$ mark draft-token matches and mismatches against the full-depth continuation. $\hookleftarrow$ denotes a newline with indentation, and leading token spaces are omitted.}
\label{tab:case-code}
\end{table*}

\begin{table*}[t]
\centering
\small
\setlength{\tabcolsep}{5pt}
\renewcommand{\arraystretch}{1.12}
\begin{tabularx}{\textwidth}{@{}>{\bfseries\raggedright\arraybackslash}p{0.13\textwidth}X@{}}
\toprule
\rowcolor{caseblue!12}[0pt][0pt]
\multicolumn{2}{@{}p{\textwidth}@{}}{\strut\textbf{Case 2: Less work per loop}\hfill Raven-Llama-3.2 / GSM8K} \\
\midrule
Question & Jane has saved \$4 of her allowance every week for the past 8 weeks. If she continues to save the same amount each week, how many more weeks will it take for her to have saved a total of \$60? \\
\addlinespace[5pt]
Output excerpt & Jane has saved \$4 each week for 8 weeks. So she has saved $8\times4=\$32$. [\ldots] So she needs to save $\$60-\$32=\$28$ more. To save \$28, she needs to save for $28\div4=7$ more weeks. The answer is 7. \\
\midrule
Execution & {\setlength{\tabcolsep}{6pt}\renewcommand{\arraystretch}{1.15}
\begin{tabular}[t]{@{}lrrrr@{}}
Method & Loops & \shortstack[r]{Candidate\\advances} & \shortstack[r]{Candidates\\per loop} & \shortstack[r]{Decode time\\(ms)} \\
\textcolor{caseblue!70!black}{LoopSpec} & 248 & 4,786 & 19.3 & 1,008.1 \\
\textcolor{casecoral!65!black}{\textbf{LoopAhead}} & 252 & \textbf{3,100} & \textbf{12.3} & \textbf{775.3}
\end{tabular}} \\
\midrule
Observation & \textbf{35.2\% fewer candidate advances and 30.0\% higher throughput.} LoopAhead executes fewer candidates per loop, reducing decoding time with a similar number of loops. \\
\bottomrule
\end{tabularx}
\caption{A greedy math-generation case from GSM8K (sample 1134). Both methods produce the same output. Candidate advances count the total candidates advanced across all loops. Candidates per loop gives their average.}
\label{tab:case-math}
\end{table*}
\endgroup

\begingroup
\definecolor{caseblue}{HTML}{7093B3}
\definecolor{casecoral}{HTML}{D98B8B}
\definecolor{casegreen}{HTML}{39724D}
\newcommand{\caseanchor}[1]{\textcolor{caseblue!70!black}{\texttt{\textbf{#1}}}}

\begin{table*}[t]
\centering
\small
\setlength{\tabcolsep}{5pt}
\renewcommand{\arraystretch}{1.12}
\begin{tabularx}{\textwidth}{@{}>{\bfseries\raggedright\arraybackslash}p{0.13\textwidth}X@{}}
\toprule
\rowcolor{caseblue!12}[0pt][0pt]
\multicolumn{2}{@{}p{\textwidth}@{}}{\strut\textbf{Case 3: Recursive drafting}\hfill Ouro-1.4B / BBH} \\
\midrule
Question & Evaluate \texttt{not (True and True) or False}. \\
\addlinespace[5pt]
Output excerpt & Remember that (i) expressions inside brackets are always evaluated first [\ldots] Plugging in A and B, we get: Z = A or B = False or False = False. So the answer is False. \\
\midrule
Recursive LAD & {\setlength{\tabcolsep}{7pt}\renewcommand{\arraystretch}{1.2}
\begin{tabular}[t]{@{}llll@{}}
Call & Context ending & Native anchor & LAD proposals \\
1 & \texttt{Remember} & \caseanchor{that}
 & \texttt{(}\ $|$\ \texttt{i}\ $|$\ \texttt{)}\ \textcolor{casegreen}{$\checkmark$} \\
2 & \texttt{\ldots{} (i)} & \caseanchor{expressions}
 & \texttt{inside}\ $|$\ \texttt{brackets}\ $|$\ \texttt{are}\ \textcolor{casegreen}{$\checkmark$}
\end{tabular}\par\vspace{3pt}
Call 2 uses the R1 state of \texttt{)} drafted in Call 1, before it is fully verified.} \\
\midrule
Execution & {\setlength{\tabcolsep}{12pt}
\begin{tabular}[t]{@{}lrr@{}}
Method & Loops & Decode time (ms) \\
\textcolor{caseblue!70!black}{LoopSpec} & 146 & 1,070.4 \\
\textcolor{casecoral!65!black}{\textbf{LoopAhead}} & \textbf{55} & \textbf{501.3}
\end{tabular}} \\
\midrule
Observation & \textbf{62.3\% fewer loops and 113.5\% higher throughput.} Earlier drafts become new sources of lookahead before full-depth verification. \\
\bottomrule
\end{tabularx}
\caption{A recursive-drafting case from BBH Boolean Expressions (sample 148), using greedy decoding. Both methods produce the same output. Vertical bars separate draft tokens. $\checkmark$ indicates that all three match the full-depth continuation.}
\label{tab:case-recursive}
\end{table*}
\endgroup

\clearpage
\endgroup

\end{document}